\newcommand{\eg}{{\it e.g., }}
\newcommand{\ie}{{\it i.e., }}
\newcommand{\reals}{\mathbb{R}}
\newcommand{\Ex}[1]{\mathbb{E}{\left[{#1}\right]}}
\renewcommand{\Pr}[1]{\mathbb{P}{\left({#1}\right)}}
\newcommand{\eqdist}{\mathrel{\stackrel{d}{=}}}
\newcommand{\argmin}{\mathop{\rm argmin}}
\newcommand{\argmax}{\mathop{\rm argmax}}
\newcommand{\matL}{\mathcal{L}}
\newcommand{\hatL}{\widehat{\mathcal{L}}}
\theoremstyle{definition}
\newtheorem{definition}{Definition}[section]
\newtheorem{theorem}{Theorem}
\newtheorem{lemma}{Lemma}
\newtheorem{proposition}{Proposition}
\newtheorem{remark}{Remark}
\providecommand{\customgenericname}{}
\newcommand{\newcustomtheorem}[2]{%
  \newenvironment{#1}[1]
  {%
   \renewcommand\customgenericname{#2}%
   \renewcommand\theinnercustomgeneric{##1}%
   \innercustomgeneric
  }
  {\endinnercustomgeneric}
}
\newif\iftodos
\newcommand{\abs}[1]{\lvert{#1}\rvert}
\newcommand{\indicator}[1]{\mathds{1}\{{#1}\}}
\DeclarePairedDelimiterX{\infdivx}[2]{(}{)}{%
  #1\;\delimsize\|\;#2%
}
\newcommand{\normal}{{\mathcal N}}
\newcommand{\betaML}{\hat{\beta}_\mathrm{ML}}
\newcommand{\dkl}[2]{D_{\mathrm{KL}}\left({#1}\middle\|{#2}\right)}
\newcommand{\Ephat}[1]{\mathbb{E}_{\hat{p}}\left[{#1}\right]}
\newcommand{\Ept}[1]{\mathbb{E}_{p}\left[{#1}\right]}
\newcommand{\Ep}[1]{\mathbb{E}_{p(\beta)}\left[{#1}\right]}
\newcommand{\E}{\mathbb{E}}
\newcommand{\bfx}{\mathbf{x}}
\newcommand{\gammatilde}{\tilde{\gamma}}
\icmltitlerunning{A Wasserstein Minimax Framework for Mixed Linear Regression}
\begin{document}

\twocolumn[
\icmltitle{A Wasserstein Minimax Framework for Mixed Linear Regression}



\icmlsetsymbol{alphabetic}{*}

\begin{icmlauthorlist}
\icmlauthor{Theo Diamandis}{alphabetic,MIT}
\icmlauthor{Yonina C. Eldar}{Weizmann}
\icmlauthor{Alireza Fallah}{MIT}
\icmlauthor{Farzan Farnia}{MIT}
\icmlauthor{Asuman Ozdaglar}{MIT}
\end{icmlauthorlist}

\icmlaffiliation{MIT}{Department of Electrical Engineering \& Computer Science, MIT, USA}
\icmlaffiliation{Weizmann}{Faculty of Math and Computer Science, Weizmann Institute of Science, Israel}

\icmlcorrespondingauthor{Theo Diamandis}{tdiamand@mit.edu}
\icmlcorrespondingauthor{Alireza Fallah}{afallah@mit.edu}
\icmlcorrespondingauthor{Farzan Farnia}{farnia@mit.edu}

\icmlkeywords{Machine Learning, ICML}

\vskip 0.3in
]



\printAffiliationsAndNotice{\textsuperscript{*}The authors are in alphabetical order.}

\begin{abstract}
Multi-modal distributions are commonly used to model clustered data in statistical learning tasks. In this paper, we consider the Mixed Linear Regression (MLR) problem. We propose an optimal transport-based framework for MLR problems, Wasserstein Mixed Linear Regression (WMLR), which minimizes the Wasserstein distance between the learned and target mixture regression models. Through a model-based duality analysis, WMLR reduces the underlying MLR task to a nonconvex-concave minimax optimization problem, which can be provably solved to find a minimax stationary point by the Gradient Descent Ascent (GDA) algorithm. In the special case of mixtures of two linear regression models, we show that WMLR enjoys global convergence and generalization guarantees. We prove that WMLR’s sample complexity grows linearly with the dimension of data. Finally, we discuss the application of WMLR to the federated learning task where the training samples are collected by multiple agents in a network. Unlike the Expectation Maximization algorithm, WMLR directly extends to the distributed, federated learning setting. We support our theoretical results through several numerical experiments, which highlight our framework’s ability to handle the federated learning setting with mixture models.
\end{abstract}

\section{Introduction}

Learning mixture models which describe data collected from multiple subpopulations has been a basic task in the machine learning literature. Multi-modal distributions typically emerge in distributed learning settings where the training data are gathered from a heterogeneous group of users.
For example, speech data or genetic data may exhibit a clustered distribution based on language and ethnicity, respectively.
Such settings require learning methods that can efficiently learn an underlying multi-modal distribution in both a centralized and a distributed setting.

In this paper, we specifically focus on Mixed Linear Regression (MLR) problems. In the MLR problem, the output variable for every user is a randomized linear function of the feature variables, generated according to one of $k$ unknown linear regression models. This structured model provides a simple but expressive framework to analyze multimodal labeled data. The clustered structure of MLR appears in several supervised learning applications. For example, users of a recommendation engine usually have unknown yet clustered sets of preferences which leads to multiple regression models. In genetic datasets, the underlying cell-type of collected samples is a latent variable that can result in different linear regression models. Under such scenarios, the cluster identity is an unknown latent variable that should be estimated along with the linear regression models.

To address the MLR problem, we propose an optimal transport-based learning framework, which we refer to as \emph{Wasserstein Mixed Linear Regression (WMLR)}. We revisit optimal transport theory to formulate the centralized MLR task as a minimax optimization problem solved by the WMLR algorithm. The formulated minimax problem is the dual problem of minimizing the Wasserstein distance between the target and learned mixture regression models. Because the original minimax problem formulated by applying the standard Kantorovich duality \cite{villani2008optimal} incurs significant computational and statistical costs, we reduce the minimax learning task to a tractable problem by a model-based simplification of the dual maximization variables. 

For a general MLR problem, we prove that the proposed minimax problem can be reduced to a nonconvex-concave optimization problem for which the gradient descent ascent (GDA) algorithm is guaranteed to converge to a stationary minimax solution. Furthermore, under the well-studied benchmark of a mixture of two symmetric linear regression models, we theoretically support our framework by providing global convergence and generalization guarantees. In particular, we show that our framework can provably converge to the global minimax solution and properly generalize from the empirical distribution of training samples to the underlying mixture regression model.

Next, we examine the WMLR algorithm for MLR tasks in the distributed federated learning setting \cite{mcmahan17a}. In a federated learning task, a set of local users connected to a central server train a global model over the samples observed in the network. While the Expectation-Maximization (EM) algorithm is widely considered as the state-of-the-art approach for centralized MLR problems, in the federated learning setting, the maximization step of every iteration of the EM algorithm requires multiple gradient computation and communication steps to obtain an exact solution via an iterative method. As a result, the EM algorithm cannot be decomposed into an efficient distributed form.

On the other hand, we show that while the maximization step in the EM algorithm does not directly reduce to a distributed form, the gradient steps of WMLR extend to the federated learning setting. As a result, our theoretical guarantees in the centralized case also hold in the federated learning setting. Finally, we present the results of several numerical experiments which support the flexibility of our proposed minimax framework in both centralized and decentralized learning tasks.

Our main contributions are summarized as follows:
\begin{enumerate}
    \item We propose a minimax framework, Wasserstein Mixed Linear Regression (WMLR), to solve the MLR problem using optimal transport theory.
    \item We reduce WMLR to a tractable nonconvex-concave minimax optimization problem, which can be solved by the GDA algorithm.
    \item We show that WMLR enjoys convergence and generalization guarantees in both centralized and federated learning settings in the symmetric MLR case.
    \item We support WMLR's theoretical guarantees with numerical experiments for the centralized and federated learning settings.
\end{enumerate}

\subsection{Related Work}
The MLR model, introduced in the statistics literature by \citet{de1989mixtures} and later in the machine learning literature by \citet{jordan1994hierarchical} as ``hierarchical mixtures of experts", provides a simple but expressive framework to analyze multimodal data. However, despite the simplicity of the model, learning mixed regression models is computationally difficult; the maximum likelihood problem is intractable in the general case \cite{yi2014alternating}.

\paragraph{EM-based Algorithms for MLR} \citet{kwon2019global} prove global convergence for balanced mixtures of symmetric two component linear regressions. Several other papers have extended \cite{kwon2019global}'s results to unequally weighted components and $K$ components in the noiseless setting (See \cite{kwon2020converges} and references therein). Furthermore, \citet{kwon2020converges} prove local convergence for $k$-MLR in the noisy case. However, the EM algorithm still requires ``good" initialization for convergence to the optimal solution  \cite{balakrishnan2017statistical}. For finding such a good initialization, several methods have been proposed in the EM literature, including methods based on PCA \cite{yi2014alternating} and method of moments \cite{chaganty2013spectral}. Without proper initialization, the EM algorithm has been empirically shown to find poor estimations due to EM's ``sharp" selection of clusters.

\paragraph{Gradient-based Algorithms for MLR} The traditional EM algorithm fully solves a maximization at each step, resulting in the ``sharp" behavior. Several alternative algorithms have been proposed that take a gradient descent approach. First-order EM, where only one gradient step in the maximization problem is taken, enjoys a local convergence guarantee \cite{balakrishnan2017statistical}. \citet{zhong2016mixed} show local convergence for a nonconvex objective function that solves the $k$-MLR problem. \citet{chen2014convex} provide a convex formulation for the two component case, but it is unclear how this method generalizes to $k > 2$.

\paragraph{Federated Learning with Heterogeneous Data}
Several approaches have been proposed in the literature to deal with heterogeneity in FL, including correcting the local updates \cite{scaffold2020} or using meta-learning techniques for achieving personalization \cite{NEURIPS2020_24389bfe}.  
In particular, clustering is one of these approaches where the idea is to group client population into clusters
\cite{sattler2020clustered,ghosh2020efficient,mansour2020three,li2021fedbn}. Most relevant to our work, \citet{mansour2020three} and \citet{ghosh2020efficient} propose alternating minimization algorithms, where at each step the agents find their cluster identity, compute the loss function gradient, and send them back to the server. \citet{ghosh2020efficient} further prove convergence guarantees for linear models and strongly convex loss functions under certain initialization assumptions. These frameworks include a much larger class of problems than MLR, but they do not enjoy the same global convergence and optimality guarantees that WMLR has for the MLR case.

\paragraph{Minimax Frameworks for Federated Learning} Several related works explore the applications of minimax frameworks for improving the fairness and robustness of federated learning algorithms. \citet{mohri2019agnostic} introduce Agnostic Federated Learning as a min-max framework that improves the fairness properties in federated learning tasks. \citet{reisizadeh2020robust} propose a minimax federated learning framework that is robust to affine distribution shifts. Similarly, \citet{deng2021distributionally} develop a distributionally-robust federated learning algorithm using a minimax formulation. However, unlike our work the mentioned frameworks do not address the clustered federated learning problem. 

\paragraph{Generative Adversarial Networks (GANs)}
Similar to our proposed framework, GANs \cite{goodfellow2014generative} reduce the distribution learning problem to a minimax optimization task. Optimal transport costs have been similarly used to formulate GAN problems \cite{arjovsky2017wasserstein,sanjabi2018convergence,farnia2018convex,feizi2020understanding}. Also, \citet{genevay2018learning} formulate a min-max problem for learning generative models using the optimal transport-based Sinkhorn loss functions. On the other hand, since standard GAN formulations perform suboptimally in learning multimodal distributions \cite{goodfellow2016nips}, \citet{farnia2020gat} propose a similar model-based minimax approach to successfully learn mixtures of Gaussians. \citet{mena2020sinkhorn} introduce the optimal transport-based Sinkhorn EM framework for learning mixture models.  However, while the mentioned minimax frameworks focus on unsupervised learning tasks, our proposed approach addresses the supervised MLR problem.

\subsection{Notation}
For two random variables $Y$ and $Y'$, $Y \eqdist Y'$ means that $Y$ and $Y'$ have the same distribution. For a finite set $\mathcal{A}$, $\text{Unif}(\{\mathcal{A}\})$ stands for the uniform distribution over $\mathcal{A}$, and $I_A(u)$ is the indicator function of $A$, \ie $I_A(u) = 1$ if $u \in A$ and $0$ otherwise.
Given two distributions $P$ and $Q$, defined over sets $\mathcal{Z}_P$ and $\mathcal{Z}_Q$, respectively, $\Pi(P,Q)$ denotes the set of joint distributions over $\mathcal{Z}_P \times \mathcal{Z}_Q$ such that its marginal over $\mathcal{Z}_P$ and $\mathcal{Z}_Q$ is equal to $P$ and $Q$, respectively. The 2-Wasserstein cost between distributions $P_Y$ and $Q_Y$ on $Y$ is defined as:
\begin{equation}
    W_2(P_Y,Q_Y)^2 := \inf_{(Y,Y') \sim M\in\Pi(P_Y,Q_Y)}\: \mathbb{E}_M\bigl[ \Vert Y-Y'\Vert^2_2\bigr],
\end{equation}
where $Y,Y'$ are constrained to be marginally distributed as $P,\, Q$, respectively. To extend this definition to the supervised learning setting, for joint distributions $P_{X,Y}$ and $Q_{X,Y}$ sharing the same marginal $P_X$ we define:
\begin{equation}
    W_2(P_{X,Y},Q_{X,Y}) := \mathbb{E}_{P_X}\bigl[W_2(P_{Y|X=x},Q_{Y|X=x}) \bigr].
\end{equation}
\section{Problem Formulation}\label{sec:formulation}
We consider the mixed linear regression problem, where the output to each input vector is generated by one of $k$ linear regression models. Specifically, we observe data points $\mathcal{S}:= \{(x_i, y_i)\}_{i=1}^n$ where, for every $i$, $x_i \in \mathcal{X} \subset \reals^d$, $y_i \in \reals$, and 
\begin{equation}\label{eq:model}
    y_i = \sum_{j=1}^k \indicator{z_i = j} (\beta_j^*)^\top x_i + \epsilon_i, \quad i = 1, ..., n,
\end{equation}
with latent variable $z_i \in \{1, 2, ..., k\}$. Each $\beta_j^* \in \reals^d$ denotes the regression vector for one of the overall $k$ components.
We assume that the input data $\{x_i\}_{i=1}^n$ are norm-bounded random vectors with $x_i$ drawn i.i.d. from $p_x$ with $\sup_{x \in \mathcal{X}}~ \|x \| \leq C$, that the noises $\{\epsilon_i\}_{i=1}^n$ are independent of the input data and drawn i.i.d. from the normal distribution $p_\epsilon := \normal(0, \sigma^2)$
where $\sigma^2$ is known,
and that each $z_i$ is drawn from $\{1, 2, ..., k\}$ uniformly at random.

The MLR problem is to find the distribution $p^\star$ that best fits the data $\mathcal{S}$ (according to some metric). We know that $p^\star$ lies in the class of distributions $\mathcal{P}$, parameterized by $\beta_{[k]}:=(\beta_j)_{j=1}^k$:
\begin{align}\label{Class_P}
\mathcal{P} & := \bigg\{p_{\beta_{[k]}}(X,Y): X \sim p_x, \, Z \sim \text{Unif}(\{1,...,k\}), \nonumber \\
& \qquad\Pr{Y \mid X = x, Z=j} \eqdist \normal(\beta_j^\top x, \sigma^2) \bigg\}.  
\end{align}

The Expectation Maximization algorithm (EM) is commonly used to tackle this problem. The EM algorithm provides a widely-used heuristic for computing the maximum likelihood estimator (MLE) for the regressors $(\beta_j^*)_{j=1}^k$. However, implementing the EM algorithm in the federated learning setting can be challenging. We consider finding the $\beta_{[k]}$ which minimizes the distance between $p_{\beta_{[k]}}$ and $p_{\beta_{[k]}}^*$ with respect to a distribution distance measure, i.e.,
\begin{equation*}
 \argmin_{\beta_{[k]}} D\left({p_{\beta_{[k]}^*}},{p_{\beta_{[k]}}}\right),   
\end{equation*}
where $D(\cdot,\cdot)$ is a distribution distance metric to be chosen. In this work, we use the expected 2-Wasserstein cost as our metric, resulting in the problem 
\begin{equation}\label{eq:wass_minimization}
 \argmin_{\beta_{[k]}} W_c(p_{\beta_{[k]}^*},p_{\beta_{[k]}})   
\end{equation}
It is worth noting that, here, and similar to well-known EM analysis \cite{kwon2019global}, we assume $\sigma$ is known to simplify the derivations. However, we could extend our framework to the case that $\sigma$ is not known by parametrizing $\mathcal{P}$ by both $\beta_{[k]}$ and $\sigma$ and minimzing over both of them in \eqref{eq:wass_minimization}. Furthermore, as we will see in Section \ref{sec:theory}, one advantage of our proposed method works without the knowledge of $\sigma$ for the symmetric case with $k=2$.

In the next section, we use the properties of 2-Wasserstein distance to build a minimax framework for mixed linear regression and then show how it can be used in the federated learning setting.

\section{A Wasserstein Minimax Approach to MLR}\label{sec:algorithm}

To formulate a minimax learning problem, we 
replace the Wasserstein cost in \eqref{eq:wass_minimization} with its dual representation according to the Kantorovich duality \cite{villani2008optimal}. This reformulation results in the following minimax optimization problem:
\begin{equation}\label{eq:kantorovich_duality}
    \argmin_{\beta_{[k]}} \max_{\psi}\: \mathbb{E}_{p_{\beta_{[k]}^*}}[\psi(x,y)] -  \mathbb{E}_{p_{\beta_{[k]}}}[\psi^c(x,y)],
\end{equation}
where the optimization variable $\psi: \reals^d \times \reals \to \reals$ is an unconstrained function, and the c-transform $\psi^c(x,y)$ is defined as
\begin{equation} \label{eq:c-transform}
    \psi^c(x,y) = \sup_{y'} \psi(x,y') -
    \frac{1}{2} \Vert {y}-{y}'\Vert^2_2.
\end{equation}
Note that the two distributions $p_{\beta_{[k]}}$ and $p_{\beta_{[k]}^*}$ have the same marginal $p_x$ and only differ in the conditional distribution $p_{y\mid x}$. As a result, the optimal transport task requires to only move mass to match the conditional distribution. This observation results in the cost function used to define the c-transform operation in \eqref{eq:c-transform}.

However, the above optimization problem for an unconstrained $\psi$ is known to be statistically and computationally complex \cite{arora2017generalization}. In this section, our goal is to show that one can solve \eqref{eq:kantorovich_duality} over the following space of functions for $\psi$ parameterized by $2k$ vectors ${\gamma_{[2k]}}\in  \mathcal{F}$, with 
\begin{align*}
 \mathcal{F}= \biggl\{ \psi_{{\gamma_{[2k]}}}:\, & \psi_{{\gamma_{[2k]}}}(x,y)= \\ & \log\biggl(\frac{\sum_{i=1}^k\exp\bigl(\frac{-1}{2\sigma^2}(y-\gamma^\top_{2i-1}x )^2\bigr)}{\sum_{i=1}^{k}\exp\bigl(\frac{-1}{2\sigma^2}(y-\gamma^\top_{2i} x)^2\bigr)}\biggr)\biggr\}.
\end{align*}
This provides a tractable minimax optimization problem whose solution is provably close to that of \eqref{eq:kantorovich_duality}.
To find the above parameterized space for $\psi$, we apply Brenier's theorem connecting the optimal $\psi$ to a transport map between two MLR models.

\begin{lemma}[Brenier's Theorem, \cite{villani2008optimal}]\label{lemma:breniers_thm}
Assume $X \sim p_x$, and consider random variables $Y$ and $Y'$ such that $(X,Y) \sim {p_{\beta_{[k]}^*}}$ and $(X,Y') \sim {p_{\beta_{[k]}}}$ provide two MLR models according to $\beta_{[k]}^*$ and $\beta_{[k]}$, respectively. Then, 
the optimal $\psi$ in \eqref{eq:kantorovich_duality} satisfies the following transportation property 
\begin{equation}\label{eq:breniers}
    (X,Y - \psi_y(X,Y)) \eqdist (X,Y'),
\end{equation}
where $\psi_y(x,y) := \frac{\partial}{ \partial y} \psi(x,y)$.
\end{lemma}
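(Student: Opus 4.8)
The plan is to recognize this as the standard quadratic-cost Brenier/Kantorovich correspondence, specialized to the conditional structure of the two MLR models. First I would use the fact that $p_{\beta_{[k]}^*}$ and $p_{\beta_{[k]}}$ share the marginal $p_x$, so that the supervised Wasserstein cost decouples as $\mathbb{E}_{p_x}[W_2(p^*_{Y\mid x}, p_{Y\mid x})]$. Hence for each fixed $x$ the problem is a one-dimensional optimal transport problem between the two Gaussian-mixture conditionals $p^*_{Y\mid x}$ and $p_{Y\mid x}$ under the cost $\tfrac12\|y-y'\|^2$, and $\psi(x,\cdot)$ is precisely the Kantorovich potential for that problem. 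Because each conditional is a finite mixture of Gaussians, it is absolutely continuous with respect to Lebesgue measure, which is exactly the hypothesis needed for Brenier's theorem to guarantee that an optimal coupling exists and is deterministic, i.e. induced by a transport map.

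Next I would extract that map from complementary slackness. By strong Kantorovich duality there is no duality gap, so if $M^*$ denotes an optimal coupling of $(Y,Y')$ given $x$, then the pointwise inequality $\psi(x,y) - \psi^c(x,y') \le \tfrac12\|y-y'\|^2$ — which holds for all $(y,y')$ directly from the definition of $\psi^c$ in \eqref{eq:c-transform} — must hold with equality $M^*$-almost surely. Fixing such a pair $(y,y')$ on the support, equality means that $y$ attains the supremum defining $\psi^c(x,y') = \sup_{\tilde y}\,\psi(x,\tilde y) - \tfrac12\|\tilde y - y'\|^2$. Writing the first-order stationarity condition in $\tilde y$ at the maximizer $\tilde y = y$ gives $\psi_y(x,y) - (y - y') = 0$, i.e. $y' = y - \psi_y(x,y)$. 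Thus on the support of the optimal coupling the second coordinate is a deterministic function of the first, namely the map $T(x,y) = y - \psi_y(x,y)$.

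Finally, since $M^*$ has first marginal $p^*_{Y\mid x}$ and is supported on the graph of $T(x,\cdot)$, its second marginal $p_{Y\mid x}$ is the pushforward of $p^*_{Y\mid x}$ under $T(x,\cdot)$; integrating over $X \sim p_x$ then yields $(X, Y - \psi_y(X,Y)) \eqdist (X,Y')$, which is \eqref{eq:breniers}. I expect the main obstacle to be the regularity bookkeeping rather than the algebra: one must justify that $\psi(x,\cdot)$ is differentiable at $p^*_{Y\mid x}$-almost every $y$ so that the first-order condition is meaningful. This follows because a $c$-concave potential for the quadratic cost differs from $\tfrac12\|y\|^2$ by a convex function — equivalently $u(x,y) := \tfrac12\|y\|^2 - \psi(x,y)$ is convex in $y$ and $T = \partial_y u$ is its monotone (sub)gradient — so $\psi(x,\cdot)$ is differentiable outside a Lebesgue-null set, which is negligible under the absolutely continuous measure $p^*_{Y\mid x}$. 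Establishing this convexity together with the a.e.\ differentiability is essentially the content of Brenier's theorem itself, which we may simply cite; the remaining reduction of the stationarity condition to $y' = y - \psi_y(x,y)$ is then routine.
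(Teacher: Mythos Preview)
Your argument is correct and is essentially the standard derivation of Brenier's theorem for the quadratic cost, specialized to the conditional structure of the MLR model. Note, however, that the paper does not supply its own proof of this lemma: it is stated as a citation from \cite{villani2008optimal} and used as a black box. Your sketch---decoupling over $x$ via the shared marginal, invoking absolute continuity of the Gaussian-mixture conditionals, and reading off the transport map from the first-order stationarity condition at the point of equality in the Kantorovich inequality---is exactly the textbook route, so there is nothing to compare against beyond the reference itself.
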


The above lemma shows that the optimal transport map's derivative will transport samples between the two domains. Therefore, we need to characterize the potential optimal transport maps and consider their integral for constraining $\psi$. To do this, we find an approximation of this optimal mapping in two steps: 
First, we use a randomized technique, adapted from \cite{farnia2020gat}, to come up with a mapping $\Psi$ that maps $(X,Y,Z)$ to $(X,Y')$ where $Z$ is the regression index for $(X,Y)$. 
Then, we obtain $\tilde{\Psi}$ by taking the expectation of $\Psi$ with respect to $Z$ to drop the dependence of $Z$. 
We bound the error of this approximation step in Theorem \ref{theorem:bound_error}.

For the first step, consider the following randomized transportation map:
\begin{equation}
    \Psi(X,Y,Z) :=  Y+\sum_{i=1}^k \indicator{Z=i}(\beta_i-\beta_i^*)^\top X ,
\end{equation}
where $Z$ denotes the regression model index in the first mixture $(X,Y)$. Note that the above randomized map will transport samples between the two MLR distributions, i.e., 
$$\bigl( X, Y+\sum_{i=1}^k \indicator{Z=i}(\beta_i-\beta_i^*)^\top X\bigr)  \eqdist \bigl(X,Y').$$ 
However, the above mapping is a randomized function of $x,y$ since $Z$ remains random after observing the outcome for $x,y$. To obtain a deterministic map $\tilde{\Psi}: \reals^d \times \reals \to  \reals$ from this randomized map, we consider its conditional expectation given $(X,Y)$:
\begin{align}\label{Eq: approximate transport map}
\tilde{\Psi} & (x,y) := \mathbb{E}\bigl[\Psi(X,Y,Z)|X=x,Y=y \bigr]\nonumber\\
&= y+ \sum_{i=1}^k \Pr {Z=i|X=x,Y=y}(\beta_i-\beta_i^*)^\top x.
\end{align}
In the above equation, by Bayes' rule we have
\begin{align*}
  & \Pr{Z=i|X=x, Y=y} \\
  & =\frac{\exp(\frac{-1}{2\sigma^2}(y-(\beta_i^*)^\top x)^2)}{\sum_{j=1}^k \exp(\frac{-1}{2\sigma^2}(y-(\beta_j^*)^\top x)^2)} \\
  & = \frac{\exp(\frac{-1}{2\sigma^2}y(\beta_i^*)^\top x)\exp(\frac{-1}{2\sigma^2}((\beta_i^*)^\top x)^2)}{\sum_{j=1}^k \exp(\frac{-1}{2\sigma^2}y(\beta_j^*)^\top x)\exp(\frac{-1}{2\sigma^2}((\beta_j^*)^\top x)^2)}. 
\end{align*}
Note that if $\tilde{\Psi}$ was the optimal transport with $\frac{\partial}{\partial y}\tilde{\psi}(x,y)= \tilde{\Psi}(x,y)$, then
\begin{equation*}
W_2\bigl(p_{\beta_{[k]}},p_{\beta_{[k]}^*}\bigr) = \mathbb{E}_{p_{\beta_{[k]}^*}}[\tilde{\psi}(x,y)] -  \mathbb{E}_{p_{\beta_{[k]}}}[\tilde{\psi}^c(x,y)]     
\end{equation*}
With $\tilde{\Psi}$ as an approximate solution, we next state the following result which bounds the duality gap of $\tilde{\Psi}$.
\begin{theorem}
\label{theorem:bound_error}
Let $\tilde{\psi}:\mathbb{R}^d\times \mathbb{R}\rightarrow \mathbb{R}$ be a convex function such that for every $x\in\mathcal{X}$, $\frac{\partial}{\partial y}\tilde{\psi}(x,Y)$ shares the same distribution with $\tilde{\Psi}(x,Y)$ in \eqref{Eq: approximate transport map}\footnote{The existence of $\tilde{\psi}$ is guaranteed based on Brenier's theorem.}.
Assume that for every $x \in \mathcal{X}$ and every $\beta_i$ we have $|(\beta_i^*)^{\top} x|\le C'$. For an observation of input and output $(X,Y)$ with regression index $Z$, we denote the optimal Bayes classifier of the cluster of $(X,Y)$ as $Z^*(X,Y)$.
Let $P_{\operatorname{err}}:= \Pr{Z\neq Z^*(X,Y)}$ be the probability error of the Bayes classifier.
Then, we have:
\begin{align*}
    0 &\le \! W_2\bigl(p_{\beta_{[k]}},p_{\beta_{[k]}^*}\bigr) \! - \mathbb{E}_{p_{\beta_{[k]}^*}}[\tilde{\psi}(x,y)] +  \mathbb{E}_{p_{\beta_{[k]}}}[\tilde{\psi}^c(x,y)] \\
    &\le\, 16(C'^2+2\sigma^2) \sqrt{P_{\operatorname{err}}} + 2(C'^2+\sigma^2) \sqrt[4]{P_{\operatorname{err}}}. 
\end{align*}
\end{theorem}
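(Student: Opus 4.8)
Before the proof we note that the left inequality $0\le\Delta$ is just weak duality: by the definition of the $c$-transform the pair $(\tilde\psi,\tilde\psi^c)$ satisfies $\tilde\psi(x,a)-\tilde\psi^c(x,b)\le\frac12\|a-b\|^2$ for all $a,b$, hence it is feasible for the dual \eqref{eq:kantorovich_duality} and its objective cannot exceed the optimum $W_2(p_{\beta_{[k]}},p_{\beta_{[k]}^*})$.

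For the upper bound on $\Delta := W_2(p_{\beta_{[k]}},p_{\beta_{[k]}^*}) - \mathbb{E}_{p_{\beta_{[k]}^*}}[\tilde\psi] + \mathbb{E}_{p_{\beta_{[k]}}}[\tilde\psi^c]$, the plan is a primal--dual sandwich built on the exact randomized coupling. Since $\Psi(X,Y,Z)$ pushes $p_{\beta_{[k]}^*}$ onto $p_{\beta_{[k]}}$, it is an admissible coupling whose cost upper-bounds $W_2$, and moreover $\mathbb{E}_{p_{\beta_{[k]}}}[\tilde\psi^c]=\mathbb{E}[\tilde\psi^c(X,\Psi(X,Y,Z))]$. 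Subtracting the feasible dual value gives
\[ \Delta \le \mathbb{E}\Bigl[\tfrac12\|Y-\Psi(X,Y,Z)\|^2 - \tilde\psi(X,Y) + \tilde\psi^c\bigl(X,\Psi(X,Y,Z)\bigr)\Bigr], \]
an expected $c$-transform slackness that is pointwise nonnegative and would vanish if $\Psi$ coincided with the Brenier map of $\tilde\psi$.

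The core is to control this slackness by the Bayes error. Writing $f_i:=(\beta_i-\beta_i^*)^\top X$ (bounded by $2C'$ under the assumption) and letting $Z^*(X,Y)$ be the Bayes label, the posterior concentrates on $Z^*$ with expected slack $P_{\operatorname{err}}=\mathbb{E}[1-\mathbb{P}(Z=Z^*\mid X,Y)]$. Morally the slackness splits into (i) the conditional variance $\mathbb{E}[\operatorname{Var}(f_Z\mid X,Y)]$ of the displacement, which is $O(C'^2 P_{\operatorname{err}})$ because the posterior is nearly one-hot, and (ii) the change $\mathbb{E}[\tilde\psi^c(X,\Psi)-\tilde\psi^c(X,\tilde\Psi)]$; bounding the Lipschitz constant of $\tilde\psi^c$ by the displacement scale and then using Cauchy--Schwarz to pass from $\mathbb{E}|\Psi-\tilde\Psi|$ to $\sqrt{\mathbb{E}[\operatorname{Var}(f_Z\mid X,Y)]}$ produces the $\sqrt{P_{\operatorname{err}}}$ factor. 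The $\sigma^2$ in the prefactors enters through the second moments of $Y$, whose scale is set by the Gaussian noise.

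The hard part is that $\tilde\Psi(x,\cdot)$ is generally not monotone in $y$ --- the posterior weights $\mathbb{P}(Z=i\mid X,Y)$ vary with $y$ --- so it is not a gradient of a convex function, and the hypothesis matches $\tilde\psi_y$ to $\tilde\Psi$ only in distribution. Consequently the pushforward of $\tilde\psi$'s (monotone) transport need not equal $p_{\beta_{[k]}}$, and the clean split above is unavailable. I would instead bound the mismatch between that pushforward and $p_{\beta_{[k]}}$ by a $2$-Wasserstein triangle/stability inequality; because the transport cost controlling this mismatch is itself only $O(\sqrt{P_{\operatorname{err}}})$, taking the $W_2$ square root of it produces the $\sqrt[4]{P_{\operatorname{err}}}$ term with its $(C'^2+\sigma^2)$ prefactor. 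Summing the variance term, the Lipschitz term, and this rearrangement mismatch yields the stated bound.
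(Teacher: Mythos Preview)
Your weak-duality argument for the lower bound is correct and matches the paper.

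For the upper bound, the paper takes a much shorter route than your sketch: it does not argue from first principles but invokes Theorem~1 of \cite{farnia2020gat}, which already delivers, for each fixed $x$, the conditional duality-gap bound
\[
W_2^2\bigl(p_{\beta_{[k]}}(\cdot\mid x),p_{\beta_{[k]}^*}(\cdot\mid x)\bigr) - \bigl(\mathbb{E}[\tilde\psi\mid x]-\mathbb{E}[\tilde\psi^c\mid x]\bigr)
\le \bigl(8\sqrt{\mathbb{E}[y^4\mid x]}+8\,\mathbb{E}[y^2\mid x]\bigr)\sqrt{P_{\operatorname{err}}(x)} + 2\,\mathbb{E}[y^2\mid x]\,\sqrt[4]{P_{\operatorname{err}}(x)}.
\]
The remainder of the paper's proof is only (i) the moment bounds $\mathbb{E}[y^2\mid x]\le C'^2+\sigma^2$ and $\mathbb{E}[y^4\mid x]\le C'^4+6C'^2\sigma^2+3\sigma^4$ coming from $|(\beta_i^*)^\top x|\le C'$ and Gaussian noise, and (ii) taking $\mathbb{E}_x$ and applying Jensen to the concave maps $t\mapsto\sqrt{t}$ and $t\mapsto\sqrt[4]{t}$, which replaces $\mathbb{E}_x[\sqrt{P_{\operatorname{err}}(x)}]$ and $\mathbb{E}_x[\sqrt[4]{P_{\operatorname{err}}(x)}]$ by $\sqrt{P_{\operatorname{err}}}$ and $\sqrt[4]{P_{\operatorname{err}}}$.

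Your proposal is, in effect, an attempt to re-derive the cited theorem. The primal--dual sandwich through the exact randomized coupling $\Psi$, and the decomposition of the slackness into a conditional-variance piece and a $c$-transform perturbation piece, are the right ingredients of that argument. Where your sketch is genuinely incomplete is the paragraph you yourself label ``the hard part'': you correctly identify that $\tilde\Psi(x,\cdot)$ need not be monotone in $y$, so matching $\tilde\psi_y$ to $\tilde\Psi$ only in law forces a rearrangement comparison, but ``a $W_2$ triangle/stability inequality'' followed by ``taking the $W_2$ square root'' is a heuristic, not a proof, and does not by itself produce the specific prefactor $2(C'^2+\sigma^2)$ (nor the $16(C'^2+2\sigma^2)$ in the first term). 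To make your route self-contained you would have to carry out that rearrangement-mismatch estimate quantitatively and track the moment constants; otherwise, invoking \cite{farnia2020gat} and then doing the two elementary steps above, as the paper does, is the cleaner path.
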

\begin{proof}
See Appendix \ref{proof-theorem:bound_error}.
\end{proof}
Finally, we estimate $\tilde{\psi}$ with a function from $\mathcal{F}$ that does not depend on the optimal $\beta^\star_{[k]}$.
\begin{proposition}\label{prop:psi}
\label{proposition:approximation}
Assume that $\sum_{i=1}^k \vert \mathbb{P}_{(\beta_j)_{j=1}^k}(Z=i|X=x,Y=y)- \mathbb{P}_{(\beta^*_j)_{j=1}^k}(Z=i|X=x,Y=y)\vert \le \delta$
and $\max_i \vert \beta_i^{\top}x\vert, \max_i \vert (\beta^*)_i^{\top}x\vert \le C'$ for every $x\in \mathcal{X}$ and feasible $\beta_i$. 
Then, there exists $(\gamma_i)_{i=1}^{2k}$ such that the function
\begin{equation} \label{eq:prop_psi_k}
    \psi_{{\gamma_{[2k]}}}(x,y)=\log\biggl(\frac{\sum_{i=1}^k\exp\bigl(\frac{-1}{2\sigma^2}(y-\gamma^\top_{2i-1}x)^2\bigr)}{\sum_{i=1}^{k}\exp\bigl(\frac{-1}{2\sigma^2}(y-\gamma^\top_{2i}x)^2\bigr)}\biggr),
\end{equation}
approximates $\tilde{\psi}$, with error bounded by $C'\delta$.
\end{proposition}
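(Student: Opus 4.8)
The plan is to prove the existence claim by an explicit construction of the $2k$ vectors, and then to show that, under the hypothesized closeness of the two posterior distributions, the resulting $\psi_{\gamma_{[2k]}}$ matches the approximate transport map $\tilde{\Psi}$ from \eqref{Eq: approximate transport map} up to the advertised $C'\delta$ error. Concretely, I would take the numerator centers to be the \emph{learned} regressors and the denominator centers to be the \emph{target} regressors, i.e. $\gamma_{2i-1}=\beta_i$ and $\gamma_{2i}=\beta_i^*$ for $i=1,\dots,k$. With this choice the normalizing constants $\tfrac{1}{k}$ and $(2\pi\sigma^2)^{-1/2}$ cancel between numerator and denominator, so $\psi_{\gamma_{[2k]}}(x,y)$ is exactly the log-ratio $\log\bigl(p_{\beta_{[k]}}(y\mid x)/p_{\beta_{[k]}^*}(y\mid x)\bigr)$ of the two mixture conditional densities, which is the natural candidate whose $y$-derivative should encode the transport between the two MLR models.

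The next step is to differentiate in $y$. The $y$-derivative of a log-sum-of-Gaussians is a softmax-weighted average, and the softmax weights attached to the numerator and denominator are precisely the two posteriors appearing in the hypothesis, namely $\mathbb{P}_{\beta}(Z=i\mid x,y)$ and $\mathbb{P}_{\beta^*}(Z=i\mid x,y)$. A direct computation gives
\begin{equation*}
\frac{\partial}{\partial y}\psi_{\gamma_{[2k]}}(x,y)=\frac{1}{\sigma^2}\Bigl(\sum_{i=1}^k \mathbb{P}_{\beta}(Z=i\mid x,y)\,\beta_i^\top x-\sum_{i=1}^k \mathbb{P}_{\beta^*}(Z=i\mid x,y)\,(\beta_i^*)^\top x\Bigr).
\end{equation*}
I would then compare $\sigma^2\,\partial_y\psi_{\gamma_{[2k]}}$ with the displacement $\tilde{\Psi}(x,y)-y=\sum_{i=1}^k \mathbb{P}_{\beta^*}(Z=i\mid x,y)(\beta_i-\beta_i^*)^\top x$ determined by $\tilde{\psi}$. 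The two denominator sums coincide term by term, and the only difference is that the first sum weights $\beta_i^\top x$ by the \emph{model} posterior $\mathbb{P}_{\beta}(Z=i\mid x,y)$ rather than by the \emph{target} posterior $\mathbb{P}_{\beta^*}(Z=i\mid x,y)$. This mismatch is structural and unavoidable: a single softmax ties the weight on $\beta_i^\top x$ to the Gaussian centered at $\beta_i$, so one cannot simultaneously place the correct regressor and the correct (target) posterior in the numerator, which is exactly why the hypothesis controls posterior closeness.

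The whole discrepancy is therefore $\sum_{i=1}^k\bigl(\mathbb{P}_{\beta}(Z=i\mid x,y)-\mathbb{P}_{\beta^*}(Z=i\mid x,y)\bigr)\beta_i^\top x$, and I would bound it by the triangle inequality using $|\beta_i^\top x|\le C'$ and the hypothesis $\sum_i|\mathbb{P}_{\beta}(Z=i\mid x,y)-\mathbb{P}_{\beta^*}(Z=i\mid x,y)|\le\delta$, obtaining a bound of $C'\delta$ that holds uniformly over $x\in\mathcal{X}$ and $y$. Since $\tilde{\psi}$ and $\psi_{\gamma_{[2k]}}$ are determined by their $y$-derivatives (the additive, $x$-dependent constant is irrelevant both to the transport and to the duality value), this uniform $C'\delta$ bound on the transport maps is the claimed approximation error.

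I expect the main obstacle to be the bookkeeping that makes the comparison between $\psi_{\gamma_{[2k]}}$ and $\tilde{\psi}$ fully precise rather than the estimate itself. One must peel off the universal identity component (the $\tfrac{y^2}{2}$ part of $\tilde\psi$, together with the $\sigma^2$ normalization, both of which are independent of the regression vectors and are absorbed by the quadratic cost in the $c$-transform) so that what is left to compare is exactly the two displacement maps; and one must verify that the appropriate error metric is the uniform distance between these maps, not the pointwise distance between the potentials, since integrating the derivative gap over $y$ would otherwise produce a bound that grows with $|y|$. Once this reduction is in place, the remaining argument is precisely the short triangle-inequality bound above.
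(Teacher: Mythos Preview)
Your proposal is correct and follows essentially the same approach as the paper's own proof: set $\gamma_{2i-1}=\beta_i$, $\gamma_{2i}=\beta_i^*$, differentiate the log-ratio in $y$ to obtain two softmax-weighted sums whose weights are the posteriors $\mathbb{P}_\beta$ and $\mathbb{P}_{\beta^*}$, and then observe that the only discrepancy with $\tilde{\Psi}(x,y)-y$ is the term $\sum_i\bigl(\mathbb{P}_\beta(Z=i\mid x,y)-\mathbb{P}_{\beta^*}(Z=i\mid x,y)\bigr)\beta_i^\top x$, which is bounded by $C'\delta$. The paper packages the two softmax sums as derivatives of auxiliary functions $\Phi_{(\beta_i)}(x,y)=\log\sum_i\exp(\beta_i^\top x\,y)$ rather than differentiating $\psi_{\gamma_{[2k]}}$ directly, but this is only a cosmetic difference; if anything, you are more careful than the paper about the $\sigma^2$ scaling and the $y^2/2$ ``identity'' piece that must be peeled off when passing from the transport-map comparison back to the potentials.
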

\begin{proof}
See Appendix \ref{proof-proposition:approximation}.
\end{proof}
Combining \eqref{eq:kantorovich_duality} and Proposition \ref{prop:psi}, we formulate the following minimax problem which approximates \eqref{eq:kantorovich_duality}:
\begin{align}\label{eq:W2_opt_problem_no_reg1}
\min_{\beta_{[k]}}\: \max_{\gamma_{[2k]}}\: \mathbb{E}_{p_{\beta_{[k]}^*}}[\psi_{{\gamma_{[2k]}}}(x,y)] - \mathbb{E}_{p_{\beta_{[k]}}}[\psi^c_{{\gamma_{[2k]}}}(x,y)].
\end{align}
By Proposition \ref{prop:psi} and Theorem \ref{theorem:bound_error}, the approximation error is bounded when the clusters can be identified with high precision by the optimal Bayes classifier. This condition can be thought of as a separability condition.


\subsection{Reducing c-transform to Norm Regularization}
In order to simplify the c-transform operation, we introduce a regularization penalty term to substitute the c-transform term in \eqref{eq:W2_opt_problem_k}. To do this, we bound the expected value of the $c$-transform $\psi^c(x, y)$ \eqref{eq:c-transform} by the expectation of $\psi(x, y)$ and a regularization term. This bound, given in the following proposition, allows us to formulate a strongly-concave maximization problem.

\begin{proposition}\label{prop:bounding_c_transform}
Consider the discriminator function $\psi_{\gamma_{[2k]}}(x,y)$ in \eqref{eq:psi_2} and recall that $\|x\| \leq C$. Assume that $2kC^2\max_i\|\gamma_i\|^2 \leq \eta < 1$. Then, for any set of vectors $\gammatilde_{[2k]} \in \reals^{2k\times d}$, we have
\begin{align}\label{eq:prop1:regularization}
    \Ex{\psi^c_{\gamma_{[2k]}}({x},y)} &
    \leq \Ex{\psi_{\gamma_{[2k]}}({x},y)}+ \frac{kC^2\Ex{(1 + C\abs{y})^2}}{1-\eta} \nonumber \\
    &\quad\times\left( \sum_{i=1}^{k}\|\gamma_i - \gammatilde_i\|^2 +\|\gamma_{i+k} - \gammatilde_i\|^2\right).
\end{align}
\end{proposition}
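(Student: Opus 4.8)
The plan is to reduce the c-transform to a pointwise quadratic optimization and exploit that the penalized objective is strongly concave. Fixing $x$ and writing $\psi(\cdot):=\psi_{\gamma_{[2k]}}(x,\cdot)$, the definition \eqref{eq:c-transform} gives $\psi^c(x,y)=\sup_{y'}[\psi(y')-\tfrac12(y-y')^2]$. The key is to show that $\psi(\cdot)$ has second derivative bounded above by $\eta$ uniformly in $y'$; then the quadratic upper bound $\psi(y')\le \psi(y)+\psi_y(x,y)(y'-y)+\tfrac{\eta}{2}(y'-y)^2$ turns the supremand into a concave quadratic in $y'$ (here the hypothesis $\eta<1$ is essential), which is maximized at $y'-y=\psi_y(x,y)/(1-\eta)$ and yields the pointwise bound $\psi^c(x,y)\le \psi(x,y)+\frac{\psi_y(x,y)^2}{2(1-\eta)}$. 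Taking expectations then reduces the statement to two derivative estimates: an upper bound of order $\eta=2kC^2\max_i\|\gamma_i\|^2$ on $\psi_{yy}$, and a bound of the form $\psi_y(x,y)^2\le 2kC^2(1+C|y|)^2\sum_{i=1}^k(\|\gamma_i-\tilde{\gamma}_i\|^2+\|\gamma_{i+k}-\tilde{\gamma}_i\|^2)$ on the squared gradient.

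To carry out the derivative estimates I would write $\psi=f-g$, where $f(y)=\log\sum_{i=1}^k\exp(-\tfrac1{2\sigma^2}(y-\gamma_i^\top x)^2)$ and $g$ is the analogous log-sum-exp over the denominator centers $\gamma_{i+k}^\top x$. Setting $a_i:=\gamma_i^\top x$, a direct computation gives $f'(y)=-\tfrac1{\sigma^2}(y-\bar a)$, where $\bar a=\sum_i w_i a_i$ is the softmax-weighted mean of the centers with weights $w_i\propto\exp(-\tfrac1{2\sigma^2}(y-a_i)^2)$; likewise $g'(y)=-\tfrac1{\sigma^2}(y-\bar b)$. Hence $\psi_y(x,y)=\tfrac1{\sigma^2}(\bar a-\bar b)$, and differentiating once more expresses $\psi_{yy}$ as a difference of softmax-weighted variances of the centers. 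Since $|a_i|=|\gamma_i^\top x|\le C\|\gamma_i\|$, each weighted variance is at most $\max_i a_i^2\le C^2\max_i\|\gamma_i\|^2$, and summing the numerator and denominator contributions over the $k$ components produces the claimed curvature bound $\eta$ (this is also where the requirement $\eta<1$ enters).

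The \emph{main obstacle} is the gradient bound, i.e. controlling $|\bar a-\bar b|$ by the $\tilde{\gamma}$-distances with the correct $(1+C|y|)$ prefactor. Introducing reference centers $\tilde a_i:=\tilde{\gamma}_i^\top x$ and splitting $\bar a-\bar b=\sum_i w_i(a_i-\tilde a_i)-\sum_i v_i(b_i-\tilde a_i)+\sum_i(w_i-v_i)\tilde a_i$ isolates the difficulty: the last cross term does not vanish because the numerator and denominator use different softmax weights. I would control it by viewing $\bar a$ as the image of the center vector under the fixed softmax-mean map and bounding that map's Lipschitz constant; the sensitivity of the weights $w_i\propto\exp(-\tfrac1{2\sigma^2}(y-a_i)^2)$ to their parameters scales like $\tfrac1{\sigma^2}|y-a_i|\lesssim\tfrac1{\sigma^2}(|y|+C\max_i\|\gamma_i\|)$, which is precisely the origin of the $(1+C|y|)$ factor. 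Using $|a_i-\tilde a_i|=|(\gamma_i-\tilde{\gamma}_i)^\top x|\le C\|\gamma_i-\tilde{\gamma}_i\|$ together with Cauchy--Schwarz then converts the per-component estimates into the Euclidean regularizer $\sum_i(\|\gamma_i-\tilde{\gamma}_i\|^2+\|\gamma_{i+k}-\tilde{\gamma}_i\|^2)$.

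Finally I would substitute the two derivative estimates into $\psi^c\le\psi+\psi_y^2/(2(1-\eta))$, take expectation over the data distribution (the regularizer is deterministic and factors out, leaving $\mathbb{E}[(1+C|y|)^2]$), and collect constants to reach the stated inequality. Beyond the cross-term above, the remaining delicate point I anticipate is purely the bookkeeping of numerical constants — the factors of $k$, the $\tfrac1{\sigma^2}$ arising at each differentiation, and the factors of $2$ — so that the weighted-variance and softmax-sensitivity bounds combine to give exactly $\eta=2kC^2\max_i\|\gamma_i\|^2$ and the prefactor $kC^2\,\mathbb{E}[(1+C|y|)^2]/(1-\eta)$.
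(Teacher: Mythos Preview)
Your approach is correct and is essentially the paper's own argument: the paper invokes Proposition~2 of \cite{farnia2020gat} for the key conditional inequality $\mathbb{E}[\psi^c\mid x]\le\mathbb{E}[\psi\mid x]+\tfrac{\text{const}}{1-\eta}\sum_j[\cdots]$, which packages exactly your Taylor-expansion strategy (bound $\psi_{yy}\le\eta$, optimize the resulting concave quadratic to get $\psi^c\le\psi+\psi_y^2/2(1-\eta)$, then bound $\psi_y^2$), and afterwards applies Cauchy--Schwarz, $\|x\|\le C$, and Young's inequality $\|\gamma_{j+k}-\gamma_j\|^2\le 2(\|\gamma_j-\tilde\gamma_j\|^2+\|\gamma_{j+k}-\tilde\gamma_j\|^2)$ to reach the stated form. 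The cross term you isolate as $\sum_i(w_i-v_i)\tilde a_i$ is precisely the $\|\gamma_{j+k}-\gamma_j\|^2$ term the paper handles via Young's inequality rather than via a Lipschitz estimate on the softmax map, so the two treatments coincide up to the constant bookkeeping you already flag.
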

\begin{proof}
See Appendix \ref{proof-prop:bounding_c_transform}.
\end{proof}

\subsection{WMLR Algorithm}
It can be seen that \eqref{eq:W2_opt_problem_no_reg1} represents a nonconvex-nonconcave optimization problem.
As shown in Proposition \ref{prop:bounding_c_transform}, we could bound the $c$-transform by adding a regularization, and, as a result, we obtain the following nonconvex strongly-concave minimax problem
\begin{align}
    \label{eq:W2_opt_problem_k}
    \min_{\beta_{[k]}}\: &\max_{\gamma_{[2k]}}\:  \matL(\beta_{[k]}, \gamma_{[2k]}) := \nonumber\\
    &\qquad \mathbb{E}_{p_{\beta_{[k]}^*}}[\psi_{{\gamma_{[2k]}}}(x,y)] - \mathbb{E}_{p_{\beta_{[k]}}}[\psi_{{\gamma_{[2k]}}}(x,y)]\nonumber\\
    &\qquad- \lambda\left(\sum_{i=1}^{k}\|\gamma_i - \gammatilde_i\|^2 +\|\gamma_{i+k} - \gammatilde_i\|^2\right),
\end{align}
where $\gammatilde$ is a properly chosen reference vector. 

Since we do not have access to $p_{\beta_{[k]}^*}$ in practice, we replace $\mathbb{E}_{p_{\beta_{[k]}^*}}[\psi_{{\gamma_{[2k]}}}(x,y)]$ above with 
$\mathbb{E}_{\hat{p}}[\psi_{{\gamma_{[2k]}}}(x,y)]$ where $\hat{p}$ is the empirical problem over the observed dataset $\mathcal{S} = \{(x_1,y_1),\ldots, (x_n, y_n)\}$. We denote the resulting function by $\hatL(\beta_{[k]}, \gamma_{[2k]})$.

WMLR, given in Algorithm \ref{alg:WMLR}, uses GDA to solve \eqref{eq:W2_opt_problem_k}. Later, we show that solving \eqref{eq:W2_opt_problem_k} can recover the underlying $\beta_{[k]}$ that solves the original unregularized \eqref{eq:W2_opt_problem_no_reg1}.

\begin{algorithm}[tb]
   \caption{WMLR}
   \label{alg:WMLR}
\begin{algorithmic}
   \STATE {\bfseries Input:} $(x_{i}, y_{i})_{i \in [n]}$, $\beta_{[k]}^{(0)}, \gamma_{[2k]}^{(0)}$, step sizes $\alpha_\mathrm{min}, \alpha_\mathrm{max}$
   \FOR{ $t  = 0$ {\bfseries to} $T-1$ }
    \FOR {$i = 1${ \bfseries to} $k$ }
   \STATE $\beta^{(t+1)}_i = \beta^{(t)}_i - \alpha_\mathrm{min} \nabla_{\beta_i} \hatL(\beta_{[k]}^{(t)}, \gamma_{[2k]}^{(t)})$
   \STATE $\gamma^{(t+1)}_{i} = \gamma^{(t)}_{i} + \alpha_\mathrm{max} \nabla_{\gamma_{i}} \hatL(\beta_{[k]}^{(t)}, \gamma_{[2k]}^{(t)})$
   \STATE $\gamma^{(t+1)}_{i+k} = \gamma^{(t)}_{i+k} + \alpha_\mathrm{max} \nabla_{\gamma_{i+k}} \hatL(\beta_{[k]}^{(t)}, \gamma_{[2k]}^{(t)})$
    \ENDFOR
   \ENDFOR
\end{algorithmic}
\end{algorithm}

\paragraph{Federated Learning} Since we use the gradient-based GDA algorithm to solve the minimax optimization problem, WMLR is particularly amenable to distributed computation. Here, we consider a federated learning setting with $M$ agents, where each agent $m$ has data samples $(x_{i,m}, y_{i,m})_{m \in [M],\, i \in [N]}$. This setting can model both the following scenarios: 1) each sample belongs to any of the $k$ components with equal probability, as in the centralized case; or 2) all the samples for each individual agent are associated with the same cluster.
The latter scenario arises when an unknown latent variable governs the regression model that best describes the relationship between $y$ and $x$.
We note that our proposed algorithm can apply to both these cases.
Every agent only has access to its own data and therefore can only estimate its own minimax objective $\hatL_m$. Therefore, the total minimax objective in the network will be
\begin{equation}
    \hatL(\beta_{[k]}^{(t)}, \gamma_{[2k]}^{(t)}) = \frac{1}{M}\sum_{m=1}^M \hatL_m(\beta_{[k]}^{(t)}, \gamma_{[2k]}^{(t)}),
\end{equation}
where $\hatL_m$ computes $\mathbb{E}_{\hat{p}}$ using only the data on agent $m$. Our Federated WMLR (F-WMLR) algorithm adds a communication step after each GDA iteration, as described in Algorithm \ref{alg:F-WMLR}. This algorithm could be extended to include multiple GDA steps or partial agent participation at each round.  We show that F-WMLR enjoys the same theoretical guarantees as WMLR in Section \ref{sec:theory} below.

\begin{algorithm}[tb]
   \caption{F-WMLR}
   \label{alg:F-WMLR}
\begin{algorithmic}
   \STATE {\bfseries Input:} $(x_{i,m}, y_{i,m})_{m \in [M],\, i \in [N]}$, $\beta_{[k]}^{(0)}, \gamma_{[2k]}^{(0)}$, step sizes $\alpha_\mathrm{min}, \alpha_\mathrm{max}$.
   \FOR{ $t  = 0$ { \bfseries to} $T-1$ }
   \STATE Broadcast $\beta_{[k]}^{(0)}, \gamma_{[2k]}^{(0)}$ to all agents
   \FOR{each agent $m=1$ {\bfseries to} $M$}
    \FOR {$i = 1${ \bfseries to} $k$ }
   \STATE $\beta^{(t+1)}_{i,m} = \beta^{(t)}_i - \alpha_\mathrm{min} \nabla_{\beta_i} \hatL_m(\beta_{[k]}^{(t)}, \gamma_{[2k]}^{(t)})$
   \STATE $\gamma^{(t+1)}_{i,m} = \gamma^{(t)}_{i} + \alpha_\mathrm{max} \nabla_{\gamma_{i}} \hatL_m(\beta_{[k]}^{(t)}, \gamma_{[2k]}^{(t)})$
   \STATE $\gamma^{(t+1)}_{i+k,m} = \gamma^{(t)}_{i+k} + \alpha_\mathrm{max} \nabla_{\gamma_{i+k}} \hatL_m(\beta_{[k]}^{(t)}, \gamma_{[2k]}^{(t)})$
    \ENDFOR
   \STATE Send $\beta_{[k], m}^{(t+1)}, \gamma_{[2k], m}^{(t+1)}$ to server
   \ENDFOR
   \STATE Collect $\beta_{[k], m}^{(t+1)}, \gamma_{[2k], m}^{(t+1)}$ from all agents $m \in [M]$
    \FOR {$i = 1${ \bfseries to} $k$ }
   \STATE $\beta^{(t)}_i  = \frac{1}{M}\sum_{m=1}^M \beta^{(t+1)}_{i,m}$
   \STATE $\gamma^{(t)}_i  = \frac{1}{M}\sum_{m=1}^M \gamma^{(t+1)}_{i,m}$
   \STATE $\gamma^{(t)}_{i+k}  = \frac{1}{M}\sum_{m=1}^M \gamma^{(t+1)}_{i+k,m}$
    \ENDFOR
   \ENDFOR
\end{algorithmic}
\end{algorithm}

\subsection{Generalization to Non-linear Models}
The WMLR algorithm can also be used for the setting where the output is a mixture of linear regressions of a nonlinear transformation of the input vector that is common to all components. The corresponding $\psi$ function will be
\begin{equation}\label{eq:psi_nonlinear}
\psi_{\phi,{\gamma_{[2k]}}}(x,y)= \log\biggl(\frac{\sum_{i=1}^k\exp\bigl(\frac{-1}{2\sigma^2}(y-\gamma^\top_{2i-1} \phi(x))^2\bigr)}{\sum_{i=1}^{k}\exp\bigl(\frac{-1}{2\sigma^2}(y-\gamma^\top_{2i} \phi(x))^2\bigr)}\biggr).
\end{equation}

Our theoretical results, discussed in the subsequent section, do not extend to the nonlinear case in general. However, WMLR will still convergence to a minimax stationary point when $\psi$ has the form \eqref{eq:psi_nonlinear}.

\section{Convergence Guarantees for WMLR}\label{sec:theory}
In this section, we focus on the case $k=2$, and further explore the minimax formulation \eqref{eq:W2_opt_problem_k}. In particular, to simplify the derivations, we focus on the \textit{symmetric} case, i.e., $\beta_2^* = -\beta_1^*$, which has been studied in the in EM literature as well (see \cite{kwon2019global} and references therein).The non-symmetric case can be reduced to the symmetric case, by first estimating $\bar{\beta}$ as the mean of $\beta^*_{[2]}$, and then replacing each data point $(x_i, y_i)$ by $(x_i, y_i - \bar{\beta}^\top x_i)$. 

In the symmetric setting, we have that $\gamma_3 = -\gamma_1$ and $\gamma_4=-\gamma_2$ in $\psi_{{\gamma_{[4]}}}$ and $\beta_2 = -\beta_1$ in $p_{\beta_{[2]}}$ . 
We next observe that, in this case, $\psi_{\gamma_{[4]}}$ can be decomposed into the following two terms
\begin{align*}
    &\psi_{\gamma_{[4]}}(x,y)\\
    &\,= \log\biggl(\frac{\exp\bigl(\frac{-1}{2\sigma^2}(y-\gamma^\top _1x)^2\bigr)+\exp\bigl(\frac{-1}{2\sigma^2}(y+\gamma^\top _1x)^2\bigr)}{\exp\bigl(\frac{-1}{2\sigma^2}(y-\gamma^\top _2x)^2\bigr)+\exp\bigl(\frac{-1}{2\sigma^2}(y+\gamma^\top _2x)^2\bigr)}\biggr)
\end{align*}
\begin{align*}
    &\,= x^\top Ax + \log\left({\exp(\frac{y\gamma^\top _1x}{2\sigma^2})+\exp(\frac{-y\gamma^\top _1x}{2\sigma^2})}\right) \\
    &\qquad- \log \left({\exp(\frac{y\gamma^\top _2x}{2\sigma^2})+\exp(\frac{-y\gamma^\top _2x}{2\sigma^2})} \right),
\end{align*}
where $A:=\tfrac{\gamma_1\gamma^\top _1-\gamma_2\gamma^\top _2}{2\sigma^2}$. 

Since the marginal distribution of $p_{\beta_{[k]}}$ over $X$ is constant (and equal to $p_x$), we can ignore the quadratic term $x^\top Ax $ as it will be canceled out in 
$\mathbb{E}_{p_{\beta_{[2]}^*}}[\psi_{{\gamma_{[4]}}}(x,y)] - \mathbb{E}_{p_{\beta_{[2]}}}[\psi^c_{{\gamma_{[4]}}}(x,y)]$. 
Furthermore, we can absorb $2\sigma^2$ into $\gamma_1$ and $\gamma_2$.
Thus, we can replace $\psi_{\gamma_{[4]}}$ in \eqref{eq:W2_opt_problem_k} with $k=2$ by
\begin{align}\label{eq:psi_2}
&\psi_{\gamma_1, \gamma_2}(x,y):= \log\left({\exp(y\gamma^\top _1x)+\exp(-y\gamma^\top _1x)}\right) \nonumber \\&\qquad - \log \left({\exp(y\gamma^\top _2x)+\exp(-y\gamma^\top _2x)} \right).    
\end{align}
As a result, and in this section, we work with $\psi_{\gamma_1, \gamma_2}(x,y)$ instead of $\psi_{\gamma_{[4]}}$ in \eqref{eq:W2_opt_problem_k}. Also, we simplify $\matL(\beta_{[2]}, \gamma_{[4]})$ and $\hatL(\beta_{[2]}, \gamma_{[4]})$ by $\mathcal{L}(\beta, \gamma_1, \gamma_2)$ and $\hatL(\beta, \gamma_1, \gamma_2)$, respectively.

Our goal is to solve the minimax problem \eqref{eq:W2_opt_problem_k} to a \textit{minimax stationary point}, which we define below.

\begin{definition}\label{def:eps_stationary_point}
Consider a function $f(x,y)$, where $f(x, \cdot)$ is strongly concave for all $x$. The point
$x^\star$ is an $\epsilon$ minimax stationary point of
\begin{equation}
    \min_x \max_y f(x,y)
\end{equation}
if 
$\| \nabla_x F(x^\star) \| \leq \epsilon$, and $F(x) = \max_y f(x,y)$.
\end{definition}
To discuss the convergence to stationary points in our setting, we define 
\begin{equation} \label{eqn:def_L}\begin{split}
\mathcal{L}(\beta) &:= \max_{\gamma_{[2]}} \mathcal{L}(\beta, \gamma_1, \gamma_2) \\
\hatL(\beta) &:= \max_{\gamma_{[2]}} \hatL(\beta, \gamma_1, \gamma_2). 
\end{split}
\end{equation}
The outline of our theoretical results is as follows: We first show that the added regularization term forms a strongly concave inner maximization problem, and using that, in Theorem \ref{theorem:convergence_to_stationary_pt}, we show WMLR finds the minimax stationary point solution of $\hatL$. Next, in Theorem  \ref{theorem:equal_to_EM}, we show that under certain assumptions, this solution is optimal $\beta^*$. Finally, we provide bounds on the generalization error as well.

\subsection{Local and Global Convergence of WMLR}
In this subsection, we show that the GDA algorithm is guaranteed to converge to the optimal solution to \eqref{eq:W2_opt_problem_k}. 

\begin{theorem}\label{theorem:convergence_to_stationary_pt}
    Consider the minimax problem \eqref{eq:W2_opt_problem_k}. Assume that $C^2 \E_{\hat{p}} [ y^2] \leq \eta < \frac{\lambda}{2}$ and $C^2 < \frac{\lambda}{2}$. Then the WMLR algorithm (Algorithm \ref{alg:WMLR}) with step sizes $\alpha_\mathrm{max} = \frac{1}{L}$ and $\alpha_\mathrm{min} = \frac{1}{\kappa^2L}$ for $L = \lambda + 4\eta(1 + \eta/\lambda + \|\gammatilde\|)$ and $\kappa = \frac{L}{\lambda - 2\eta}$ will find an $\epsilon$-approximate stationary point in the following number of iterations:  
    $$\mathcal{O}\left(\frac{\kappa^2L \Delta + \kappa L^2(2\eta/\lambda)^2}{\epsilon^2}\right),$$
    where $\Delta := \hatL(\beta^{(0)}) - \min_\beta \hatL(\beta)$.
\end{theorem}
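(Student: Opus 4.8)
The plan is to cast \eqref{eq:W2_opt_problem_k} as a nonconvex--strongly-concave minimax problem and then run the two-timescale gradient-descent-ascent analysis of the type developed by Lin, Jin, and Jordan. The argument proceeds in three stages: (i) verify that the inner maximization over $\gamma_{[2]}$ is strongly concave with an explicit modulus; (ii) establish joint Lipschitz smoothness of $\hatL$ with the stated constant $L$; and (iii) invoke the standard two-timescale descent-ascent potential argument to extract the iteration complexity, tracking the initial dual error to recover the second term in the bound.

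First I would compute the Hessian of $\hatL(\beta,\cdot)$ in $\gamma$. The regularization term $-\lambda(\|\gamma_1-\gammatilde_1\|^2 + \|\gamma_2 - \gammatilde_2\|^2)$ contributes $-2\lambda I$, i.e.\ strong concavity of modulus $2\lambda$. The data-dependent term involves the $\log(2\cosh(\cdot))$ discriminators from \eqref{eq:psi_2}, whose second derivatives are governed by $\operatorname{sech}^2$ and are therefore uniformly bounded. Using $\|x\|\le C$ together with the hypothesis $C^2\E_{\hat{p}}[y^2]\le\eta$, I would bound the spectral norm of the $\psi$-part of the Hessian by $2\eta$, so that the inner problem is $\mu$-strongly concave with $\mu=\lambda-2\eta>0$, positive exactly because $\eta<\lambda/2$. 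An analogous bound on all second-order terms, now also controlling the mixed $\beta$-derivatives and the offset $\|\gammatilde\|$ entering through the regularization center, yields the joint gradient-Lipschitz constant $L=\lambda+4\eta(1+\eta/\lambda+\|\gammatilde\|)$. This stage is the main obstacle: it requires careful bookkeeping of the derivatives of the log-sum-exp discriminator and showing that each contribution is controlled by $\eta$ through the $C^2\E_{\hat{p}}[y^2]$ factor.

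With $\mu=\lambda-2\eta$ and $L$ in hand, I set $\kappa=L/\mu=L/(\lambda-2\eta)$ and invoke the two-timescale machinery. By Danskin's theorem the outer objective $\hatL(\beta)=\max_{\gamma_{[2]}}\hatL(\beta,\gamma_1,\gamma_2)$ from \eqref{eqn:def_L} is differentiable with $\nabla\hatL(\beta)=\nabla_\beta\hatL(\beta,\gamma^\star(\beta))$, and one shows $\hatL(\beta)$ is $\mathcal{O}(\kappa L)$-smooth. The ascent step $\alpha_{\max}=1/L$ makes the inner iterate contract geometrically toward the unique maximizer $\gamma^\star(\beta^{(t)})$, while the descent step $\alpha_{\min}=1/(\kappa^2 L)$ keeps the outer variable moving slowly enough that $\gamma^\star$ does not drift faster than the ascent can track. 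I would then define a potential $\Phi_t=\hatL(\beta^{(t)})+c\,\|\gamma^{(t)}-\gamma^\star(\beta^{(t)})\|^2$ for a suitable $c$, prove a per-iteration decrease of the form $\Phi_{t+1}\le\Phi_t-\tfrac{1}{2\kappa^2 L}\|\nabla\hatL(\beta^{(t)})\|^2$, and telescope over $t=0,\dots,T-1$.

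Finally, summing the descent inequality bounds $\min_{t<T}\|\nabla\hatL(\beta^{(t)})\|^2$ by $\mathcal{O}\big(\kappa^2 L(\Phi_0-\min_\beta\hatL(\beta))/T\big)$, and the gap $\Phi_0-\min_\beta\hatL(\beta)$ splits into the function gap $\Delta=\hatL(\beta^{(0)})-\min_\beta\hatL(\beta)$ and the initial dual error $c\,\|\gamma^{(0)}-\gamma^\star(\beta^{(0)})\|^2$. Initializing $\gamma^{(0)}=\gammatilde$ makes the regularization gradient vanish, so $\nabla_\gamma\hatL(\beta^{(0)},\gammatilde)$ reduces to the $\psi$-term gradient, which is of order $\eta$; strong concavity then gives $\|\gamma^{(0)}-\gamma^\star(\beta^{(0)})\|$ of order $\eta/\lambda$, matching the $2\eta/\lambda$ in the statement. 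Substituting produces the two additive contributions $\kappa^2 L\Delta$ and $\kappa L^2(2\eta/\lambda)^2$ in the numerator, and setting the resulting bound equal to $\epsilon^2$ yields the claimed iteration count $\mathcal{O}\big((\kappa^2 L\Delta+\kappa L^2(2\eta/\lambda)^2)/\epsilon^2\big)$.
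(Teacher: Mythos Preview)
Your proposal is correct and follows essentially the same route as the paper: verify that the inner problem is $(\lambda-2\eta)$-strongly concave via the $\operatorname{sech}^2$ bound on the log-sum-exp Hessian, assemble the joint smoothness constant $L=\lambda+4\eta(1+\eta/\lambda+\|\gammatilde\|)$ by separately bounding the $\gamma$- and $\beta$-blocks, and then apply the two-timescale GDA analysis of Lin, Jin, and Jordan. The only noteworthy difference is economy and the source of the $(2\eta/\lambda)^2$ term: the paper simply invokes \cite{lin2019gradient}, Theorem~4.4 as a black box and identifies $D=2\eta/\lambda$ as the \emph{diameter} of the effective constraint set (since the regularized maximizer necessarily satisfies $\|\gamma_i-\gammatilde\|\le\eta/\lambda$), whereas you re-derive the potential-function argument and obtain the same quantity as the \emph{initial dual error} under the additional assumption $\gamma^{(0)}=\gammatilde$, which is not actually stipulated by Algorithm~\ref{alg:WMLR}.
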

\begin{proof}
See Appendix \ref{proof-theorem:convergence_to_stationary_pt}.
\end{proof}

\begin{remark} \label{corr:NN} Consider the minimax problem \eqref{eq:W2_opt_problem_k} where $x$ is replaced by non-linear $\phi(\cdot;w)$, a neural network parameterized by weights $w$. The weights $w$ appear in the minimization problem; hence, the problem remains nonconvex strongly-concave and the guarantee in Theorem \ref{theorem:convergence_to_stationary_pt} also applies to the non-linear case, \ie WMLR still results in an approximate stationary point.
\end{remark}

In Theorem \ref{theorem:equal_to_EM} below, we show global convergence under correlated projections along $\beta^*$ and $\tilde{\gamma}$.
\begin{theorem}\label{theorem:equal_to_EM}
    Consider two symmetric components for feature variables $x$. Suppose that the variables $\tilde{\gamma}^{\top}x$ and ${\beta^*}^{\top}x$ are correlated enough such that
    $$\max\bigl\{ \Pr{\tilde{\gamma}^{\top}xx^{\top}\beta^* \le 0}\, ,\,  \Pr{\tilde{\gamma}^{\top}xx^{\top}\beta^* \ge 0} \bigr\} = 1.$$
    Then, any stationary minimax solution $\widehat{\beta}$ to the minimiax problem \eqref{eq:W2_opt_problem_k} which satisfies the above condition will further provide a global minimax solution to \eqref{eq:W2_opt_problem_k}.
\end{theorem}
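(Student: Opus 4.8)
The plan is to combine three ingredients: the strong concavity of the inner maximization (guaranteed under the hypotheses of Theorem~\ref{theorem:convergence_to_stationary_pt}), a value characterization of the global minimizers of the envelope $\mathcal{L}(\beta)=\max_{\gamma_1,\gamma_2}\mathcal{L}(\beta,\gamma_1,\gamma_2)$, and the sign‑correlation hypothesis to rule out spurious stationary points. First I would record the structural decomposition. Writing $\psi_{\gamma_1,\gamma_2}(x,y)=\log 2\cosh(y\gamma_1^\top x)-\log 2\cosh(y\gamma_2^\top x)$ and $h_\beta(\gamma):=\mathbb{E}_{p_{\beta^*}}[\log 2\cosh(y\gamma^\top x)]-\mathbb{E}_{p_\beta}[\log 2\cosh(y\gamma^\top x)]$, the objective splits as $\mathcal{L}(\beta,\gamma_1,\gamma_2)=\big(h_\beta(\gamma_1)-\lambda\|\gamma_1-\tilde{\gamma}\|^2\big)-\big(h_\beta(\gamma_2)+\lambda\|\gamma_2-\tilde{\gamma}\|^2\big)$, so the inner problem decouples into two strongly concave subproblems with unique maximizers $\gamma_1^\star(\beta),\gamma_2^\star(\beta)$. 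Evaluating at $\gamma_1=\gamma_2=\tilde{\gamma}$ gives $\mathcal{L}(\beta,\tilde{\gamma},\tilde{\gamma})=0$, so $\mathcal{L}(\beta)\ge 0$ for all $\beta$; and whenever $p_\beta=p_{\beta^*}$ we have $h_\beta\equiv 0$, forcing $\mathcal{L}(\beta)=0$. Since in the symmetric model $p_\beta$ depends on $\beta$ only through $|\beta^\top x|$, the global minimizers are exactly $\beta=\pm\beta^*$ (for nondegenerate $p_x$), with optimal value $0$. Thus it suffices to show that a sign‑correlated minimax‑stationary $\widehat\beta$ attains $\mathcal{L}(\widehat\beta)=0$.

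Next I would write the stationarity conditions. By Danskin's theorem the outer gradient freezes the $\gamma_i$ at $\gamma_i^\star(\widehat\beta)$ and only sees the $\mathbb{E}_{p_\beta}$ term, giving
\[
\mathbb{E}_{x,\epsilon}\!\left[\Big(\tanh\big((\widehat\beta^\top x+\epsilon)(\gamma_1^\star)^\top x\big)(\gamma_1^\star)^\top x-\tanh\big((\widehat\beta^\top x+\epsilon)(\gamma_2^\star)^\top x\big)(\gamma_2^\star)^\top x\Big)\,x\right]=0,
\]
coupled with the inner conditions $\nabla h_{\widehat\beta}(\gamma_1^\star)=2\lambda(\gamma_1^\star-\tilde{\gamma})$ and $\nabla h_{\widehat\beta}(\gamma_2^\star)=-2\lambda(\gamma_2^\star-\tilde{\gamma})$. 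The role of the regularization anchor $\tilde{\gamma}$ is crucial here: although $h_\beta$ is even in $\gamma$, the anchor selects the maximizer lying in the cone around $\tilde{\gamma}$, so I would first argue that $(\gamma_i^\star)^\top x$ shares the sign of $\tilde{\gamma}^\top x$ almost surely. Combined with the hypothesis (WLOG $\tilde{\gamma}^\top x\,\beta^{*\top}x\ge 0$ and $\tilde{\gamma}^\top x\,\widehat\beta^\top x\ge 0$ a.s.), this makes $\beta^{*\top}x,\ \widehat\beta^\top x,\ (\gamma_i^\star)^\top x$ all sign‑aligned, which removes every sign ambiguity inside the $\tanh$ terms and collapses the multivariate stationarity system to a scalar fixed‑point relation in $|\widehat\beta^\top x|$. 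I expect this scalar relation to coincide with the population‑EM fixed‑point equation for the symmetric two‑component model, whose only sign‑aligned solution (cf.\ \cite{kwon2019global}) is $|\widehat\beta^\top x|=|\beta^{*\top}x|$ a.s.; the spurious root $\widehat\beta=0$ is excluded because it would make the sign‑alignment degenerate rather than strict. This yields $p_{\widehat\beta}=p_{\beta^*}$, hence $\mathcal{L}(\widehat\beta)=0$, so $\widehat\beta$ is a global minimax solution.

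The hard part will be this last reduction, in two places. The first is controlling the inner maximizers well enough to guarantee that $(\gamma_i^\star)^\top x$ cannot flip sign relative to $\tilde{\gamma}^\top x$; I would handle this using the strong concavity (so $\gamma_i^\star$ stays in a neighborhood of the anchor $\tilde{\gamma}$ whose size is controlled by $\eta/\lambda$) together with the $C'$‑boundedness of the projections. The second, and the genuine obstacle, is establishing that the resulting $\tanh$‑weighted scalar stationarity map is strictly monotone, so that sign‑correlation admits the single admissible root $\pm\beta^*$; I would attempt this by differentiating the scalar map and using that sign‑alignment makes each $\tanh$ contribution act with a definite sign, mirroring the contraction/monotonicity arguments from the symmetric‑MLR EM analysis. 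The main risk is that the coupling between the $\gamma$‑optimality conditions and the $\beta$‑stationarity does not cleanly factor through a single scalar quantity; if so, I would instead prove the weaker statement that $\mathcal{L}$ is pseudoconvex over the sign‑aligned region, which is equally sufficient to upgrade stationarity to global optimality.
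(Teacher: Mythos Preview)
Your preliminary observations are correct and useful: the decoupling $\mathcal{L}(\beta,\gamma_1,\gamma_2)=\bigl(h_\beta(\gamma_1)-\lambda\|\gamma_1-\tilde{\gamma}\|^2\bigr)-\bigl(h_\beta(\gamma_2)+\lambda\|\gamma_2-\tilde{\gamma}\|^2\bigr)$, the lower bound $\mathcal{L}(\beta)\ge 0$ from $\gamma_1=\gamma_2=\tilde{\gamma}$, and the identification of the global minimizers $\pm\beta^*$ are all valid. However, the core of your argument has two real gaps.

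First, the step ``$(\gamma_i^\star)^\top x$ shares the sign of $\tilde{\gamma}^\top x$ a.s.'' does not follow from strong concavity. The bound $\|\gamma_i^\star-\tilde{\gamma}\|\le O(\eta/\lambda)$ controls the \emph{norm} distance to the anchor, but for any fixed $x$ with $|\tilde{\gamma}^\top x|$ small relative to $\|x\|\cdot\eta/\lambda$ the projection can flip sign. There is no uniform lower bound on $|\tilde{\gamma}^\top x|$ in the hypotheses, so this fails on a set of positive probability in general. Second, the promised ``scalar fixed-point relation in $|\widehat{\beta}^\top x|$'' is not substantiated: the $\beta$-stationarity equation is a $d$-dimensional vector identity involving $\gamma_1^\star,\gamma_2^\star$, and there is no mechanism in your outline that collapses it to a one-dimensional equation, nor does it match the population-EM fixed-point equation once the $\gamma_i^\star$ are left unconstrained.

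The paper avoids both difficulties by a different route. Rather than first pinning down the location of $\gamma_i^\star$, it uses the $\beta$-stationarity condition itself to force $\hat{\gamma}_1=\hat{\gamma}_2$. Concretely, $\nabla_\beta\mathcal{L}=0$ reads
\[
\mathbb{E}_{p_{\hat{\beta}}}\bigl[\tanh(y\hat{\gamma}_1^\top x)\,xx^\top\bigr]\hat{\gamma}_1
=\mathbb{E}_{p_{\hat{\beta}}}\bigl[\tanh(y\hat{\gamma}_2^\top x)\,xx^\top\bigr]\hat{\gamma}_2,
\]
and one shows that the map $\gamma\mapsto \mathbb{E}_{p_{\hat{\beta}}}[\tanh(y\gamma^\top x)\,xx^\top]\gamma$ is injective on the sign-correlated region by computing its Jacobian $\mathbb{E}[h(y\gamma^\top x)\,xx^\top]$ with $h(z)=\tanh z + z\,\tanh'(z)$, an odd increasing function, and using the sign hypothesis to get positive definiteness. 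Once $\hat{\gamma}_1=\hat{\gamma}_2$, the first two expectation terms cancel and the regularizer alone forces $\hat{\gamma}_1=\hat{\gamma}_2=\tilde{\gamma}$; feeding this into the $\gamma$-optimality condition yields $\mathbb{E}_{p_{\hat{\beta}}}[yx\tanh(y\tilde{\gamma}^\top x)]=\mathbb{E}_{p_{\beta^*}}[yx\tanh(y\tilde{\gamma}^\top x)]$, and the same Jacobian/invertibility trick, now in $\beta$, gives $\hat{\beta}=\pm\beta^*$. The key idea you are missing is this two-stage use of the invertibility of $F(\gamma)$ (and its $\beta$-analogue) via the odd increasing function $h$: it delivers $\hat{\gamma}_1=\hat{\gamma}_2=\tilde{\gamma}$ directly, rendering both your sign-alignment step and the scalar reduction unnecessary.
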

\begin{proof}
See Appendix \ref{proof-theorem:equal_to_EM}.
\end{proof}

The above theorem shows that if $\tilde{\gamma}$ and $\beta^*$ are sufficiently aligned such that the random variables $\tilde{\gamma}^{\top} x$ and ${\beta^*}^{\top} x$ are correlated enough, then a stationary minimax point for the WMLR's minimax problem further leads to a global solution to the WMLR problem.

Note that the condition in the theorem statement automatically holds for a $1$-dimensional scalar $x$. In general, the theorem condition suggests that we need to chose the reference vector $\tilde{\gamma}$ almost aligned to $\beta^*$. One way to do so is as follows: First, note that $\beta^*_{\text{norm}} := \beta^*/\|\beta^*\|$ is the top eigenvector of $\mathcal{M}:= \E_x[(x^\top \beta^*)^2xx^\top]$. Let us assume the top eigenvector of $\mathcal{M}$ is unique, i.e., $\beta^*_{\text{norm}}$is the only eigenvector corresponding to the maximum eigenvalue of $\mathcal{M}$.  In that case, it can be shown that for sufficiently large $n$, $\beta^*_{\text{norm}}$ is approximately the top eigenvector of $M_n:=\frac{1}{n} \sum_{i=1}^n y_i^2 x_i x_i^\top$. To see this, we need to show the solution to $\argmax_{v: \|v\|=1} v^\top M_n v$ is close to $\beta^*_{\text{norm}}$. To do so, first note that by classic concentration bounds we could show that, for sufficiently large $n$, $v^\top M_n v$ is close to $\E[\|v^\top (xy)\|^2]$. That said, maximizing $\E[\|v^\top (xy)\|^2]$ over $v$ is equivalent to maximizing $\E[\|v^\top (xx^\top \beta^*)\|^2] = v^\top \E_x[(x^\top \beta^*)^2xx^\top] v = v^\top \mathcal{M} v$ over $v$, and we assumed $\beta^*_{\text{norm}}$ is the unique solution to the latter problem.  
We further evaluate this choice of refrence vector in the our numerical experiments.

\begin{remark} \label{corr:FL} (Federated Learning)
F-WMLR (Algorithm \ref{alg:F-WMLR}) will produce the same sequence of iterates as the centralized WMLR algorithm by linearity of the gradient operator. Therefore, the above convergence results for WMLR will also apply to F-WMLR.
\end{remark}

\subsection{Generalization of WMLR}
Here we establish generalization error bounds for the convergence of the value and gradient of the empirical objective to those of the underlying distribution.
\begin{theorem}
Recall the definition of $\matL(\beta)$ and $\hatL(\beta)$ \eqref{eqn:def_L}. Consider the minimax mixed regression setting with norm-bounded random vector $X, \; \Vert X \Vert_2 \le C$ and noise vector $\epsilon\sim\mathcal{N}(0,\sigma^2)$. Assume that $\max\{C,\sigma\} \le 1$. Then, we have the following generalization bounds hold with probability at least $1-\delta$ for every $\Vert \beta \Vert_2\le \eta$:
   \begin{align*}
    |\mathcal{L}(\beta) - \widehat{\mathcal{L}}(\beta) | &\le O\bigl(\sqrt{\frac{d\eta^4\log(\eta/\lambda\delta)}{n}}\bigr) , \\ 
    \Vert\nabla\mathcal{L}(\beta)-\nabla\widehat{\mathcal{L}}(\beta)\Vert_2 &\le O\big(\sqrt{\frac{d\eta^4 \log(\eta/{\lambda}\delta)}{(1-\eta/\lambda)^2n}} \bigr).
\end{align*}

\end{theorem}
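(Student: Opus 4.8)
The plan is to control both generalization gaps by reducing them to a single uniform (over the dual variables $\gamma_1,\gamma_2$) deviation of the empirical expectation $\mathbb{E}_{\hat p}[\psi_{\gamma_1,\gamma_2}]$ from its population counterpart $\mathbb{E}_{p_{\beta^*}}[\psi_{\gamma_1,\gamma_2}]$. I first note that $\mathcal{L}$ and $\widehat{\mathcal{L}}$ differ only in the term built from $p_{\beta^*}$, so that $\mathcal{L}(\beta,\gamma_1,\gamma_2)-\widehat{\mathcal{L}}(\beta,\gamma_1,\gamma_2)=\mathbb{E}_{p_{\beta^*}}[\psi_{\gamma_1,\gamma_2}]-\mathbb{E}_{\hat p}[\psi_{\gamma_1,\gamma_2}]$, a quantity independent of $\beta$. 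Combining the elementary inequality $|\max_\gamma f-\max_\gamma g|\le\sup_\gamma|f-g|$ with the definitions in \eqref{eqn:def_L} gives $|\mathcal{L}(\beta)-\widehat{\mathcal{L}}(\beta)|\le\sup_{\gamma_1,\gamma_2}|\mathbb{E}_{p_{\beta^*}}[\psi_{\gamma_1,\gamma_2}]-\mathbb{E}_{\hat p}[\psi_{\gamma_1,\gamma_2}]|$, so it suffices to bound this supremum. A preliminary step is to confine the supremum to a bounded domain: the inner problem is strongly concave in $(\gamma_1,\gamma_2)$ with modulus $\Theta(\lambda-\eta)$ (established while proving Theorem \ref{theorem:convergence_to_stationary_pt} via Proposition \ref{prop:bounding_c_transform}, under $C^2\mathbb{E}_{\hat p}[y^2]\le\eta<\lambda/2$), while the data-dependent part of the objective has $\gamma$-gradient of order $\eta$; hence both the population and empirical maximizers $\gamma^\star(\beta),\hat\gamma^\star(\beta)$ lie in a ball $\Gamma$ of radius $R=O(\|\tilde\gamma\|+\eta/\lambda)=O(\eta)$ around $\tilde\gamma$, to which I restrict $\gamma$.

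For the value bound I use the representation $\psi_{\gamma_1,\gamma_2}(x,y)=\log\cosh(y\gamma_1^\top x)-\log\cosh(y\gamma_2^\top x)$, which satisfies $|\psi|\le\|x\|\,|y|(\|\gamma_1\|+\|\gamma_2\|)\le 2RC|y|$ and is $C|y|$-Lipschitz in each $\gamma_i$. Because $y=\pm(\beta^*)^\top x+\epsilon$ with $\|\beta^*\|\le\eta$, $\|x\|\le C\le1$, and $\epsilon\sim\mathcal{N}(0,\sigma^2)$ with $\sigma\le1$, the variable $y$ is sub-Gaussian with parameter $O(\eta)$; consequently $\psi_{\gamma_1,\gamma_2}(x,y)$ is sub-Gaussian of scale $O(R\eta)=O(\eta^2)$ with second moment $\mathbb{E}[\psi^2]=O(R^2\eta^2)=O(\eta^4)$. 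This is precisely where the $\eta^4$ inside the square root originates. I then take an $\epsilon'$-net of $\Gamma\subset\mathbb{R}^{2d}$ of cardinality $(3R/\epsilon')^{2d}$, apply a sub-Gaussian/Bernstein tail bound to $\frac1n\sum_i\psi_{\gamma}(x_i,y_i)$ at each net point, union-bound, and transfer back to all of $\Gamma$ using the Lipschitz property (equivalently, a Dudley chaining bound on the sub-Gaussian process). Optimizing $\epsilon'$ yields $\sup_{\Gamma}|\mathbb{E}_{p_{\beta^*}}[\psi]-\mathbb{E}_{\hat p}[\psi]|=O(\sqrt{d\eta^4\log(\eta/\lambda\delta)/n})$ with probability $1-\delta$, where the factor $d$ and the $\log(\eta/\lambda)$ come from the covering number of the radius-$R$ ball and the $\log(1/\delta)$ from the union bound.

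For the gradient bound I invoke Danskin's (envelope) theorem: since the inner maximization is strongly concave with a unique maximizer, $\nabla\mathcal{L}(\beta)=\nabla_\beta\mathcal{L}(\beta,\gamma^\star(\beta))$ and $\nabla\widehat{\mathcal{L}}(\beta)=\nabla_\beta\widehat{\mathcal{L}}(\beta,\hat\gamma^\star(\beta))$. The $\beta$-gradient lives only in the $p_\beta$ term, which is identical in $\mathcal{L}$ and $\widehat{\mathcal{L}}$, so $\|\nabla\mathcal{L}(\beta)-\nabla\widehat{\mathcal{L}}(\beta)\|$ is controlled by the Lipschitz constant of $\gamma\mapsto\nabla_\beta\mathbb{E}_{p_\beta}[\psi_\gamma]$ times the maximizer displacement $\|\gamma^\star(\beta)-\hat\gamma^\star(\beta)\|$. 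Strong concavity bounds the displacement by $\|\gamma^\star-\hat\gamma^\star\|\le\frac1\mu\sup_{\Gamma}\|\nabla_\gamma(\mathbb{E}_{p_{\beta^*}}-\mathbb{E}_{\hat p})[\psi_\gamma]\|$ with $\mu=\Theta(\lambda-\eta)$, which is the source of the $(1-\eta/\lambda)^{-2}$ factor inside the square root. Finally I bound the uniform gradient deviation $\sup_{\Gamma}\|\nabla_\gamma(\mathbb{E}_{p_{\beta^*}}-\mathbb{E}_{\hat p})[\psi_\gamma]\|$ by the same covering-plus-concentration scheme applied to the vector field $\nabla_\gamma\psi_\gamma=(\tanh(y\gamma_1^\top x)\,yx,\,-\tanh(y\gamma_2^\top x)\,yx)$, whose coordinates are again sub-Gaussian at the appropriate scale, giving the stated gradient rate.

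The main obstacle is that the class $\{\psi_\gamma\}$ is \emph{not} uniformly bounded, since $y$ has Gaussian (unbounded) tails; the standard bounded-differences/McDiarmid route therefore does not apply directly, and one must exploit the Gaussianity of $\epsilon$ to obtain sub-Gaussian concentration and to keep the $\eta$-dependence tight (the exponent $\eta^4=R^2\,\mathbb{E}[y^2]$ is fragile to any slack in these moment estimates). A second delicate point is that the gradient bound couples a purely statistical fluctuation with the deterministic strong-concavity modulus of the regularized inner problem: obtaining the $(1-\eta/\lambda)^{-2}$ dependence requires showing that the inner maximizer is well-conditioned \emph{uniformly} in $\beta$ over $\|\beta\|\le\eta$, and that the maps $\beta\mapsto\gamma^\star(\beta),\hat\gamma^\star(\beta)$ indeed remain inside the ball $\Gamma$ on which the concentration estimates were established.
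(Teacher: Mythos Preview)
Your proposal is correct and follows essentially the same route as the paper: restrict the dual variables to a ball of radius $O(\eta/\lambda)$ around $\tilde\gamma$ via the strongly concave regularizer, exploit the sub-Gaussianity of $y$ (and hence of $\psi$ and $\nabla_\gamma\psi$) for pointwise concentration, and finish with a covering/union-bound argument; for the gradient, both you and the paper invoke Danskin's theorem and control the maximizer displacement. The only cosmetic difference is in this last step: the paper writes the first-order optimality conditions as a fixed-point equation $g(\gamma)=\gamma-\tfrac1\lambda\mathbb{E}[yx\tanh(y\gamma^\top x)]$ and inverts using that the Jacobian of $g$ is $O(\eta/\lambda)$-close to the identity, whereas you obtain the same $(1-\eta/\lambda)^{-1}$ factor directly from the strong-concavity stability inequality $\|\gamma^\star-\hat\gamma^\star\|\le\mu^{-1}\|\nabla_\gamma(\mathcal L-\widehat{\mathcal L})\|$; these are equivalent.
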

\section{Numerical Experiments}
We consider $k=2$ and focus on the symmetric case with $\beta_2^* = -\beta_1^*$
for the numerical experiments. We implement\footnote{\url{https://github.com/tjdiamandis/WMLR}.} Algorithms \ref{alg:WMLR} and \ref{alg:F-WMLR} in Section \ref{sec:algorithm} for both the centralized and federated learning settings. In both settings, we run experiments for a high SNR (10) and a low SNR (1) regime. We include two additional SNRs in the federated experiments.
We set $d=128$, draw $x_i$ from $\normal(0, I)$, set noise variance $\sigma^2 = 1$, and draw $\beta^\star$ uniformly at random from the spherical shell $\mathcal{S}_\mathrm{SNR} = \{z \mid \|z\| = \mathrm{SNR}\}$. We search over regularization parameter $\lambda$, and step sizes are $\alpha_\mathrm{max} = {1}/{2\lambda}$ and $\alpha_\mathrm{min} = {\alpha_\mathrm{max}}/{10}$ (motivated by Theorem \ref{theorem:convergence_to_stationary_pt}). Note that the algorithms operate without the knowledge of the noise variance or SNR. 

\paragraph{Evaluation Metrics} We evaluate methods in terms of the relative error $\frac{\|\hat{\beta} - \beta^*\|}{\|\beta^\star\|}$, where $\hat{\beta}$ is the last iterate of the applied method, and the negative log likelihood (NLL) for the symmetric 2-component MLR problem \eqref{Class_P}. Note that NLL can be computed without knowledge of the true underlying regressor $\beta^\star$ and noise variance $\sigma^2$.

\paragraph{Baselines} We compare WMLR against EM and Gradient EM (GEM), which is similar to EM, but instead of solving the maximization problem at each iteration, it takes one gradient ascent step. The noise variance for EM and GEM is initialized as ${\sigma^2}^{(0)} = 1$. See Appendix \ref{sec:app_EM_MLR} for additional details and discussion of the EM and GEM algorithms for two-component MLR. We do not compare these algorithms to GAN based methods in this work since GAN-based methods usually take thousands of iterations to converge, as shown by \citet{farnia2020gat} for Gaussian mixture models.

\subsection{Centralized Setting}
For all experiments, the initial iterates $\beta^{(0)}$, $\gamma_1^{(0)}$ and $\gamma_2^{(0)}$ are all chosen i.i.d. from $\normal(0,\, \frac{1}{{d}}I)$. Note that these initializations will have approximately unit norm \cite{vershynin2018high}.
We use the eigenvector of $\Ex{y^2xx^T}$ associated with the largest eigenvalue as the reference vector $\gammatilde$; however, the algorithm is not sensitive to this parameter. WMLR simply needs a reference vector that has non-negligible correlation with $\beta^\star$ to avoid vanishing gradients (also see Theorem \ref{theorem:equal_to_EM}). 

We compare the solution reached at iteration 100 of each algorithm in Table \ref{tab:centralized_exp}. We evaluate each algorithm over several hyperparameter choices, and we choose the run with the smallest final negative log likelihood.
Both WMLR and EM converge quickly (under 100 iterations) while GEM often does not converge by that number of iterations, as seen in the higher SNR case. In the low SNR case, all three algorithms have similar performance.
In the high SNR case, WMLR outperforms EM and GEM both in terms of negative log likelihood and the distance to the true parameter. However, one drawback of WMLR and GEM compared to EM is that WMLR and GEM require hyperparameter tuning.
For additional discussion, implementation details, and the hyperparameter selection, see Appendix \ref{sec:app_numerical}.

\begin{table}[t]
    \centering
        \caption{Comparison of algorithms at iteration $T = 100$ ($\beta^{(T)})$ in terms of negative log likelihood (NLL) and relative $\ell_2$ error, $\|\beta^{(T)} - \beta^\star\|/\|\beta^\star\|$.}
    \label{tab:centralized_exp}
     \vspace{3mm}
    \begin{tabular}{|c|c||l|l|}
    \hline
    \multicolumn{4}{|c|}{Centralized Experiments, $n = 100,000$}\\
    \hline
    { SNR } & Method &  {\hspace{0.5cm}NLL\hspace{0.5cm}} & {Relative $\ell_2$ error}\\
    \hline
    \hline
    &EM             &${2.115}$         &${3.79\times 10^{-2}}$\\
    10&GEM          &${3.765}$         &${1.03}$ \\
    &WMLR           &$\mathbf{2.059}$         &${\mathbf{5.31\times 10^{-3}}}$\\
    \hline
    &EM             &${1.657}$        &${8.62\times 10^{-2}}$\\
    1&GEM           &$\mathbf{1.656}$        &$\mathbf{5.20\times 10^{-2}}$\\
    &WMLR           &$\mathbf{1.656}$        &${7.78\times 10^{-2}}$\\
    \hline
    \hline
    \multicolumn{4}{|c|}{Centralized Experiments, $n = 10,000$}\\
    \hline
    \hline
    &EM             &${2.715}$          &${1.21\times 10^{-1}}$\\
    10&GEM          &${3.758}$          &${9.98 \times 10^{-1}}$\\
    &WMLR           &$\mathbf{2.065}$   &${\mathbf{2.08\times 10^{-2}}}$\\
    \hline
    &EM             &${1.671}$          &${2.95\times 10^{-1}}$\\
    1&GEM           &$\mathbf{1.657}$   &$\mathbf{1.80 \times 10^{-1}}$\\
    &WMLR           &${1.668}$          &${2.75\times 10^{-1}}$\\
    \hline

    \end{tabular}
\end{table}

\subsection{Federated Setting}
As described in Algorithm \ref{alg:F-WMLR}, we extend WMLR to F-WMLR by broadcasting the model to all agents from the central node at each iteration, having each agent take one gradient decent ascent step using his or her own data, and then averaging the resulting new iterates at the central node.

Recall that the EM algorithm operates via two repeated steps: an expectation step and a \textit{full} maximization step. However, in the federated setting, we cannot expect the average of the maximizers to be the maximizer of the average. 
Here, we implement EM in the following way: For each maximization, we perform several communication rounds to solve the maximization problem at each EM step via gradient ascent. We stop this inner maximization when the norm of the gradient is under the threshold $\nu = 0.01$ or after 50 iterations.

We simulate $M = 10,000$ agents with $10$ data points each. 
We assume that each agent $m \in \{1, ..., M\}$ has all her samples drawn from only one of of the two regressors, \ie agent $m$'s samples $(y_{m,i}, x_{m,i})_{i=1}^n$ satisfy
\begin{equation}\label{eq:fl_model}
    y_{m,i} =  z_m(\beta^*)^\top x_{m,i} + \epsilon_{m,i}, \quad i = 1, ..., n,
\end{equation}
where $z_m$ is drawn from $\mathrm{Unif}\left(\{-1, 1\}\right)$.
Again, we draw $\beta^{(0)}, \gamma_1^{(0)}, \gamma_2^{(0)}$ from $\normal(0, \frac{1}{d}I)$. 
The final solutions and the convergence behaviors of these algorithms are compared in Table \ref{tab:fl_exp} and Figure \ref{fig:FL_plot}. EM does not converge in 10,000 iterations for the medium and high SNR cases. 
In our experiments, GEM and WMLR both converged to a comparable level of relative error (Table 2).
Theoretically, both of these methods should converge to the optimal $\beta^\star$ in the population case, so the observed error is mainly due to their generalization performance. Although WMLR takes longer per iteration (about 3min for WMLR vs 1min for GEM in the federated case), WMLR is overall much faster due to the small number of iterations.  WMLR consistently converges in 60 to 100 iterations regardless of SNR, whereas GEM is fast in the low SNR cases but is up to 175x slower in the highest SNR case.

In addition, there is a significant communication cost in the federated setting. Therefore, WMLR's smaller iteration number is particularly important in this setting. While WMLR's implementation is more complex, WMLR enjoys higher robustness to the choice of hyperparameters than GEM. The same hyperparamaters work for all tested SNRs (Figure 2 and Table 3 in the appendix), and iteration count is comparable across all SNRs. Since communication rounds are very costly in the federated learning setting, these results suggest that WMLR may be better equipped than GEM or EM to handle distributed multimodal learning tasks. Additional details are provided in Appendix \ref{sec:app_numerical}.

\begin{table}[h]
    \centering
    \caption{Comparison of algorithms at the final iterate in terms of relative $\ell_2$ error, $\|\beta^{(T)} - \beta^\star\|/\|\beta^\star\|$. The iterations required for convergence is also compared. Note that EM did not convergence (d.n.c.) for SNR = 10 and SNR = 5 cases within 5,000 iterations.}
    \label{tab:fl_exp}
    \vspace{3mm}
    \begin{tabular}{|c|c||l|l|}
    \hline
    \multicolumn{4}{|c|}{Federated Experiments, Final Iterate}\\
    \hline
    { SNR } & Method &  {Iterations Req.} & {Relative $\ell_2$ error}\\
    \hline
    \hline
    &EM         &d.n.c& d.n.c \\
    20&GEM      &12,948& $\mathbf{1.93\times10^{-3}}$ \\
    &WMLR       &$\mathbf{74}$&$2.49\times10^{-3}$ \\
    \hline
    &EM         &d.n.c& d.n.c \\
    10&GEM      &$2,007$& $\mathbf{3.92\times10^{-3}}$ \\
    &WMLR       &$\mathbf{98}$&$4.93\times10^{-3}$ \\
    \hline
    &EM         &d.n.c& d.n.c \\
    5&GEM      &$295$& $\mathbf{8.32\times10^{-3}}$ \\
    &WMLR       &$\mathbf{81}$&$9.95\times10^{-3}$ \\
    \hline
    &EM         &$544$ & $5.60\times10^{-2}$ \\
    1&GEM       &$\mathbf{15}$ & $\mathbf{5.60\times10^{-2}}$\\
    &WMLR       &$66$&$7.25\times10^{-2}$ \\
    \hline
    \end{tabular}
\end{table}

\begin{figure}[t]
    \centering
    \includegraphics[width=.99\linewidth]{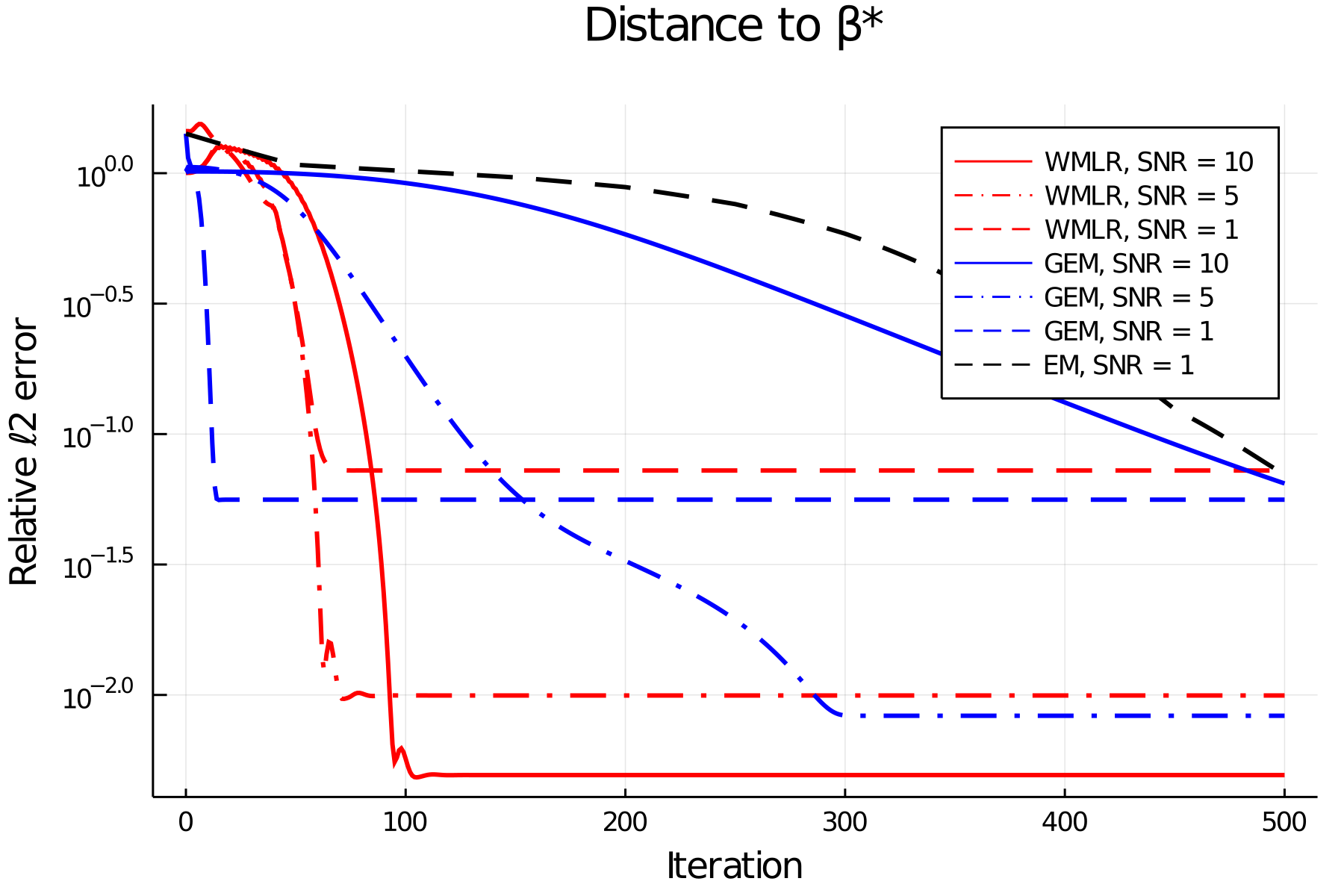}
    \caption{Convergence of $\hat{\beta}$ to $\beta^\star$ in the federated setting with 10,000 nodes with 10 samples each. EM is removed for tests which it did not converge to a reasonable value within 5,000 iterations.}
    \label{fig:FL_plot}
\end{figure}

\section{Acknowledgment}
This paper was partially funded by MIT-IBM Watson AI Lab and Defense Science and Technology Agency. This work was also supported by the QuantERA grant C’MON-QSENS!.
Alireza Fallah acknowledges support from MathWorks Engineering Fellowship.

\bibliography{refs}
\bibliographystyle{icml2021}
\onecolumn
\newpage
\appendix

\begin{center}
\textbf{\LARGE{Appendix}}
\end{center}

\section{Proof of Theorem \ref{theorem:bound_error}}\label{proof-theorem:bound_error}
In order to prove this theorem, note that Theorem 1 in \cite{farnia2020gat} shows that for a fixed $x$ we have
\begin{align*}
    0\, &\le\, W_2^2(P_{\beta_{[k]}}(y|x),P_{\beta_{[k]}^*}(y|x)) 
    - \mathbb{E}_{p_{\beta_{[k]}^*}}[\tilde{\psi}(x,y)] -  \mathbb{E}_{p_{\beta_{[k]}}}[\tilde{\psi}^c(x,y)] \\
    &\le\, \bigl(8\sqrt{\mathbb{E}[y^4|x]} +8\mathbb{E}[y^2|x]\bigr) \sqrt{P_{\operatorname{err}}(x)} + 2\mathbb{E}[y^2|x] \sqrt[4]{P_{\operatorname{err}}(x)}. 
\end{align*}
Also, note that the multiplicative matrix $\Gamma_i$'s in \cite{farnia2020gat}'s Theorem 1 will be equal to the identity matrix based on the theorem's assumptions. Since we assume $\vert (\beta_i^*)^\top x \vert\le C'$ holds with probability $1$, we have
\begin{equation*}
 \mathbb{E}[y^2|x] \le C'^2+\sigma^2,\quad \mathbb{E}[y^4|x] \le C'^4+3\sigma^4+ 6C'^2\sigma^2.
\end{equation*}
Therefore, we obtain the following inequalities
\begin{align*}
    0\, &\le\, W_2^2(P_{\beta}(y|x),P_{\beta^*}(y|x)) - 
    - \mathbb{E}_{p_{\beta_{[k]}^*}}[\tilde{\psi}(x,y)] -  \mathbb{E}_{p_{\beta_{[k]}}}[\tilde{\psi}^c(x,y)] \\
    &\le\, \bigl(8\sqrt{C'^4+3\sigma^4+6C'^2\sigma^2} +8(C'^2+\sigma^2)\bigr) \sqrt{P_{\operatorname{err}}(x)} + 2(C'^2+\sigma^2) \sqrt[4]{P_{\operatorname{err}}(x)}\\
    &\le\, 16(C'^2+2\sigma^2)  \sqrt{P_{\operatorname{err}}(x)} + 2(C'^2+\sigma^2) \sqrt[4]{P_{\operatorname{err}}(x)}.
\end{align*}
Furthermore, note that $\sqrt{p}$ and $\sqrt[4]{p}$ are both concave functions, and hance an application of Jensen's inequality implies the following result since $P_{\operatorname{err}}= \mathbb{E}[P_{\operatorname{err}}(x)]$:
\begin{align*}
    0\, &\le\, \mathbb{E}_{P_X}\bigl[W_c(P_{\beta}(y|x),P_{\beta^*}(y|x))\bigr] 
    - \mathbb{E}_{p_{\beta_{[k]}^*}}[\tilde{\psi}(x,y)] -  \mathbb{E}_{p_{\beta_{[k]}}}[\tilde{\psi}^c(x,y)] \\
    &\le\, 16(C'^2+2\sigma^2) \sqrt{P_{\operatorname{err}}} + 2(C'^2+\sigma^2) \sqrt[4]{P_{\operatorname{err}}}. 
\end{align*}
Therefore, the proof is complete.

\section{Proof of Proposition \ref{proposition:approximation}}
\label{proof-proposition:approximation}
Consider the function $\tilde{\Psi}$ and note that it can be written as follows:
\begin{align}
  \tilde{\Psi}(x,y) &=  y+\sum_{i=1}^k \Pr {Z=i|x=x,Y=y}(\beta_i^*-\beta_i)^\top x \\
  &= y+ \sum_{i=1}^k \Pr {Z=i|x=x,Y=y}{\beta_i^*}^\top x - \sum_{i=1}^k \Pr {Z=i|x=x,Y=y}\beta_i^\top x.
\end{align}
Here, we define 
\begin{equation}
    \Phi_{(\beta_i)_{i=1}^k}(x,y):= \log\bigl( \sum_{i=1}^k \exp(\beta_i^\top x y)\bigr).
\end{equation}
Then, we have
\begin{equation}
    \frac{\partial \Phi_{(\beta^*_i)_{i=1}^k}}{\partial y}(x,y):= \sum_{i=1}^k \operatorname{Pr}_{(\beta_i)_{i=1}^k} ({Z=i|x=x,Y=y}){\beta_i^*}^\top x.
\end{equation}
Therefore,
\begin{align}
    \tilde{\Psi} &= y + \frac{\partial \Phi_{(\beta^*_i)_{i=1}^k}}{\partial y}(x,y) - \frac{\partial \Phi_{(\beta_i)_{i=1}^k}}{\partial y}(x,y) \nonumber \\
    \quad & + \sum_{i=1}^k \biggl[\operatorname{Pr}_{(\beta_i)_{i=1}^k}(Z=i|x=x,Y=y)-\operatorname{Pr}_{(\beta^*_i)_{i=1}^k}(Z=i|x=x,Y=y)\biggr]\beta_i^\top x.
\end{align}
Therefore, under the proposition's assumptions
\begin{equation}
    \bigl\vert \, \tilde{\Psi} - y - \frac{\partial \big\{\Phi_{(\beta^*_i)_{i=1}^k} - \Phi_{(\beta_i)_{i=1}^k} \big\}}{\partial y}(x,y) \,  \bigl\vert \le C'\delta. 
\end{equation}
Note that according to the definitions we have
\begin{equation}
    \Phi_{(\beta^*_i)_{i=1}^k}(x,y)- \Phi_{(\beta_i)_{i=1}^k}(x,y)= \log\biggl( \frac{\sum_{i=1}^k \exp(\beta_i^\top x y)}{\sum_{i=1}^k \exp({\beta^*_i}^\top x y)}\biggr).
\end{equation}
The above two equations complete the proposition's proof.


\section{Proof of Proposition \ref{prop:bounding_c_transform}}\label{proof-prop:bounding_c_transform}
Note that due to the tower property of conditional expectation we have:
\begin{align*}
    &\Ex{\psi^c_{\gamma_{[2k]}}({x},y)}\\
    \stackrel{(a)}{=}  &\mathbb{E}\bigl[\mathbb{E}[\psi^c_{\gamma_{[2k]}}({x},y)|x] \bigr] \\
    \stackrel{(b)}{\le} &\mathbb{E}\biggl[\psi_{\gamma_{[2k]}}({x},y)+\mathbb{E}\biggl[\frac{3k^2\Vert x\Vert^2_2\mathbb{E}[y^2|x]}{1-\eta}\sum_{j=1}^k \bigl[ \Vert\gamma_j^Tx - \tilde{\gamma}_j^Tx\Vert_2^2+\Vert{\gamma}_{j+k}^Tx - \tilde{\gamma}_{j}^Tx\Vert_2^2 + \vert{\gamma}_{j+k}  -{\gamma}_{j}\vert^2 \Vert x\Vert^4_2\bigr]|x\biggr]\biggr] \\
    \stackrel{(c)}{\le}&\mathbb{E}\bigl[\psi_{\gamma_{[2k]}}({x},y)\bigr]+ \mathbb{E}\biggl[\mathbb{E}\biggl[\frac{3k^2\Vert x\Vert^4_2\mathbb{E}[y^2|x]}{1-\eta}\sum_{j=1}^k \bigl[ \Vert\gamma_j - \tilde{\gamma}_j\Vert_2^2+\Vert{\gamma}_{j+k} - \tilde{\gamma}_{j}\Vert_2^2 + \Vert{\gamma}_{j+k}  -{\gamma}_{j}\Vert^2\Vert x\Vert^2_2 \bigr] | x \biggr]\biggr] \\
    \stackrel{(d)}{\le} &\mathbb{E}\bigl[\psi_{\gamma_{[2k]}}({x},y)\bigr]+\mathbb{E}\biggl[\frac{3k^2C^4(\sigma^2+\eta^2)}{1-\eta}\sum_{j=1}^k \bigl[ \Vert\gamma_j - \tilde{\gamma}_j\Vert_2^2+\Vert{\gamma}_{j+k} - \tilde{\gamma}_{j}\Vert_2^2 + C^2\Vert{\gamma}_{j+k}  -{\gamma}_{j}\Vert_2^2 \bigr]\biggr]
    \\
    \stackrel{(e)}{\le} &\mathbb{E}\bigl[\psi_{\gamma_{[2k]}}({x},y)\bigr]+\frac{6k^2C^4(1+C^2)^2}{1-\eta}\sum_{j=1}^k \bigl[ \Vert\gamma_j - \tilde{\gamma}_j\Vert_2^2+\Vert{\gamma}_{j+k} - \tilde{\gamma}_{j}\Vert_2^2 \bigr].
\end{align*}
In the above, (a) follows from the tower property of conditional expectation. (b) is a consequence of {\cite{farnia2020gat}}, Proposition 2 for reference vectors $\tilde{\gamma}^T_jx$. (c) comes from the application of the Cauchy–Schwarz inequality. (d) uses the bounded norm of $x$, and (e) follows from the assumption $\eta<1$ and the application of Young's inequality implying that
\begin{equation}
    \Vert{\gamma}_{j+k}  -{\gamma}_{j}\Vert_2^2 \le 2 \bigl( \Vert{\gamma}_{j}  -\tilde{\gamma}_{j}\Vert_2^2+ \Vert{\gamma}_{j+k}  -\tilde{\gamma}_{j}\Vert_2^2\bigr). 
\end{equation}
Therefore, the proof is complete.

\section{Proof of Theorem \ref{theorem:convergence_to_stationary_pt}}
\label{proof-theorem:convergence_to_stationary_pt}
This theorem follows from the convergence results of Theorem 4.4 in Lin et al. \cite{lin2019gradient}, restated in Lemma \ref{lemma:GDA_convergence} below. 

\begin{lemma} \label{lemma:GDA_convergence}
(Theorem 4.4 in \cite{lin2019gradient})
Consider a $L$-smooth function $f(\beta,\gamma)$ where $f(\beta, \cdot)$ is $\mu$-strongly concave with $\gamma \in \Gamma$, a convex set with diameter $D$. Define condition number $\kappa = L/\mu$,
\begin{equation}
    \Phi(\cdot) = \max_{\gamma \in \Gamma} f(\cdot, \gamma),
\end{equation}
and $\Delta = \Phi(\beta^{(0)}) - \min_\beta\Phi(\beta)$. Then GDA returns an $\epsilon$-stationary point in $\mathcal{O}\left(\frac{\kappa^2L\Delta + \kappa L^2D^2}{\epsilon^2}\right)$ iterations when step sizes are chosen to be $\eta_\beta = \Theta(1/\kappa^2L)$ and $\eta_\gamma = \Theta(1/L)$.
\end{lemma}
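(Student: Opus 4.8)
\section*{Proof proposal for Lemma~\ref{lemma:GDA_convergence}}

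The plan is to prove this cited GDA complexity bound by a potential-function (Lyapunov) argument that simultaneously tracks the progress of the outer variable $\beta$ on the envelope $\Phi$ and the error with which the inner iterate $\gamma^t$ approximates the exact inner maximizer. Throughout, write $\gamma^\star(\beta) := \arg\max_{\gamma\in\Gamma} f(\beta,\gamma)$, which is well defined and unique by $\mu$-strong concavity of $f(\beta,\cdot)$, and introduce the tracking error $\delta_t := \|\gamma^t - \gamma^\star(\beta^t)\|^2$. The GDA updates are $\beta^{t+1} = \beta^t - \eta_\beta\nabla_\beta f(\beta^t,\gamma^t)$ and $\gamma^{t+1} = \mathrm{Proj}_\Gamma(\gamma^t + \eta_\gamma\nabla_\gamma f(\beta^t,\gamma^t))$.

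First I would record the three structural facts that drive the analysis. By Danskin's theorem, $\Phi$ is differentiable with $\nabla\Phi(\beta) = \nabla_\beta f(\beta,\gamma^\star(\beta))$. Using $L$-smoothness of $f$ together with $\mu$-strong concavity, one shows that $\gamma^\star(\cdot)$ is $\kappa$-Lipschitz and that $\Phi$ is $L_\Phi$-smooth with $L_\Phi = \Theta(\kappa L)$; the Lipschitz bound for $\gamma^\star$ follows from comparing the first-order optimality conditions at two points $\beta,\beta'$ against the strong monotonicity of $-\nabla_\gamma f(\beta,\cdot)$, and the smoothness of $\Phi$ is then immediate from the chain rule together with these two facts.

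Next I would derive the two coupled recursions. For the \emph{descent on $\Phi$}, I apply the $L_\Phi$-smoothness descent lemma to the $\beta$-update and replace $\nabla_\beta f(\beta^t,\gamma^t)$ by $\nabla\Phi(\beta^t) = \nabla_\beta f(\beta^t,\gamma^\star(\beta^t))$ at an error controlled by $\|\nabla_\beta f(\beta^t,\gamma^t) - \nabla_\beta f(\beta^t,\gamma^\star(\beta^t))\| \le L\sqrt{\delta_t}$; with $\eta_\beta \le 1/(4L_\Phi)$ and Young's inequality this yields $\Phi(\beta^{t+1}) \le \Phi(\beta^t) - c_1\eta_\beta\|\nabla\Phi(\beta^t)\|^2 + c_2\eta_\beta L^2\delta_t$ for explicit constants. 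For the \emph{contraction of the tracking error}, a single projected ascent step with $\eta_\gamma = \Theta(1/L)$ on the $\mu$-strongly-concave $L$-smooth map $f(\beta^t,\cdot)$ gives $\|\gamma^{t+1}-\gamma^\star(\beta^t)\|^2 \le (1-\eta_\gamma\mu)\delta_t$; combining this with the drift bound $\|\gamma^\star(\beta^{t+1}) - \gamma^\star(\beta^t)\| \le \kappa\eta_\beta\|\nabla_\beta f(\beta^t,\gamma^t)\|$ from $\kappa$-Lipschitzness and Young's inequality produces $\delta_{t+1} \le (1-\tfrac{\eta_\gamma\mu}{2})\delta_t + c_3\kappa^2\eta_\beta^2\|\nabla\Phi(\beta^t)\|^2$.

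Finally I would combine the recursions into the Lyapunov function $V_t := \Phi(\beta^t) + c\,\delta_t$ with $c = \Theta(\mu)$ and verify, under $\eta_\beta = \Theta(1/(\kappa^2 L))$ and $\eta_\gamma = \Theta(1/L)$, that the choice of $c$ makes the net coefficient of $\delta_t$ nonpositive while keeping a strictly negative coefficient $-a\eta_\beta$ on $\|\nabla\Phi(\beta^t)\|^2$, so that $V_{t+1} \le V_t - a\eta_\beta\|\nabla\Phi(\beta^t)\|^2$. Telescoping from $0$ to $T-1$ and using $V_0 - V_T \le \Delta + cD^2$ (with $\delta_0 \le D^2$ from the diameter bound) gives $\min_{t<T}\|\nabla\Phi(\beta^t)\|^2 \le (\Delta + cD^2)/(a\eta_\beta T)$; substituting $1/\eta_\beta = \Theta(\kappa^2 L)$ and $c = \Theta(\mu) = \Theta(L/\kappa)$ turns this into $O((\kappa^2 L\Delta + \kappa L^2 D^2)/T)$, and setting it equal to $\epsilon^2$ yields the claimed iteration count. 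The main obstacle is precisely the coupling: the $\Phi$-descent is only useful when $\delta_t$ is small, yet $\delta_t$ chases a moving target $\gamma^\star(\beta^t)$ whose drift is itself driven by the $\beta$-update. The separation of timescales — taking $\eta_\beta$ a factor of order $\kappa^2$ smaller than $\eta_\gamma$ — is exactly what forces the $\beta$-drift term $c_3\kappa^2\eta_\beta^2\|\nabla\Phi\|^2$ to be dominated by the geometric contraction $\tfrac{\eta_\gamma\mu}{2}\delta_t$, and verifying that all cross terms close with a strictly negative coefficient on $\|\nabla\Phi\|^2$ is the delicate part of the argument.
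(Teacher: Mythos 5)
The paper offers no proof of this lemma at all: it is imported verbatim as Theorem 4.4 of \cite{lin2019gradient} and used as a black box in the proof of Theorem \ref{theorem:convergence_to_stationary_pt}, so the only meaningful comparison is with the original source, whose two-timescale analysis your sketch reconstructs faithfully (Danskin's theorem, $\kappa$-Lipschitzness of $\gamma^\star(\cdot)$, $\Theta(\kappa L)$-smoothness of $\Phi$, the coupled inexact-descent and tracking-error recursions, and the potential $V_t = \Phi(\beta^t) + c\,\delta_t$). Your constants also close correctly --- with $\eta_\gamma = \Theta(1/L)$ the contraction rate $\eta_\gamma\mu/2 = \Theta(1/\kappa)$ dominates both the leakage $\eta_\beta L^2 = \Theta(L/\kappa^2)$ (forcing $c = \Theta(\mu)$) and the drift term $c\,\kappa^2\eta_\beta^2 = \Theta(\eta_\beta/\kappa)$, and the telescoped bound $\Theta\bigl(\kappa^2 L(\Delta + \mu D^2)/T\bigr)$ recovers the stated complexity since $\kappa^2 L\mu = \kappa L^2$ --- so the proposal is correct and essentially the same argument the cited theorem rests on.
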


First consider the inner maximization problem. $\psi(\bfx, y)$ is the difference of two log-sum-exp terms. Since log-sum-exp has a Hessian with maximum eigenvalue bounded by 1, the norm of the Hessian of the non-concave terms (with respect to $\gamma_i$) are bounded by $\Ex{\|y\bfx\|^2} \leq \eta$. Thus, the inner maximization is $\lambda - 2\eta$ strongly concave. Furthermore, the inner-maximization is $\lambda + 4\eta$ smooth with respect to the vector of maximization variables. 

The gradient of the objective with respect to $\beta$ is $2(\|\gamma_1\| + \|\gamma_2\|)\|y\bfx\|^2$-Lipschitz. Note that the optimal $\gamma_i$'s will be no further than $\eta/\lambda$ away from the reference vector $\gammatilde$. As a result, the optimal $\gamma_i$'s satisfy
\begin{align}
    \|\gamma_i\| \leq \|\gamma_i - \gammatilde\| + \|\gammatilde\| \leq \frac{\eta}{\lambda} + \|\gammatilde\|.
\end{align}
Thus, the objective is $4\eta(\eta/\lambda + \|\gammatilde\|)$ smooth with respect to $\beta$. Applying 
Theorem 4.4 in \cite{lin2019gradient}
completes the proof.
\section{Proof of Theorem \ref{theorem:equal_to_EM}}
\label{proof-theorem:equal_to_EM}
In order to prove Theorem 3, note that at a stationary minimax point $(\hat{\beta},\hat{\gamma}_1,\hat{\gamma}_2)$ we will have:
\begin{align*}
    \nabla_\beta \mathbb{E}\bigl[\log\bigl( \frac{\exp(y\hat{\gamma}_1^Tx) +\exp(-y\hat{\gamma}_1^Tx)}{\exp(y\hat{\gamma}_2^Tx) +\exp(-y\hat{\gamma}_1^Tx)}\bigr) \bigr] &= \mathbf{0}.
\end{align*}
\textbf{Claim:} Under the theorem's assumptions, the above equation implies that $\hat{\gamma}_1=\hat{\gamma}_2$ .

To show this claim, note that
\begin{equation*}
   \nabla_\beta \mathbb{E}\bigl[\log\bigl( \frac{\exp(y\hat{\gamma}_1^Tx) +\exp(-y\hat{\gamma}_1^Tx)}{\exp(y\hat{\gamma}_2^Tx) +\exp(-y\hat{\gamma}_1^Tx)}\bigr) \bigr] =  \mathbb{E}\bigl[\tanh{(y\hat{\gamma}_1^Tx)}xx^T \bigr]\hat{\gamma}_1 - \mathbb{E}\bigl[\tanh{(y\hat{\gamma}_2^Tx)}xx^T \bigr]\hat{\gamma}_2.
\end{equation*}
As a result, the optimality condition implies that
\begin{equation}
    \mathbb{E}\bigl[\tanh(y\hat{\gamma}_1^Tx)xx^T \bigr]\hat{\gamma}_1 = \mathbb{E}\bigl[\tanh(y\hat{\gamma}_2^Tx)xx^T \bigr]\hat{\gamma}_2.
\end{equation}
In addition, the following will be the partial derivative of the above expression with respect to $\gamma$:
\begin{equation}
    \frac{\partial}{\partial \gamma}\mathbb{E}\bigl[\tanh(y{\gamma}^Tx)xx^T\gamma \bigr] = \mathbb{E}\bigl[h(y{\gamma}^Tx)xx^T\bigr],
\end{equation}
where we define $h(z):= \tanh(z)+z\tanh'(z)$ which is an odd increasing function. Note that due to the theorem's assumption for the optimal solution we have $\gamma^Txx^T\beta > 0$ (or the reverse inequality) to always hold. Therefore, without loss of generality we can assume $\mathbb{E}[y|x]\gamma^Tx> 0$ holds with probability $1$ over $p_x$. We claim that this result implies that
\begin{equation}
    \mathbb{E}\bigl[ h(y\gamma^T x)|x \bigr]> 0.
\end{equation}
The above equality holds because $h$ is an odd increasing function, and $\mathbb{E}[y|x]\gamma^T x=0$ results in the following:
\begin{equation*}
 \mathbb{E}[h(y\gamma^T x)|x]=0.   
\end{equation*}
As a result, assuming $\mathbb{E}[y|x]\gamma^T x> 0$ the following inequality holds with probability $1$ over $p_x$, because $\mathbb{E}[h(Z)]$ is increasing in the mean of $Z$ for a normally-distributed $Z$ with a fixed variance: 
\begin{equation*}
 \mathbb{E}[h(y\gamma^T x)|x]\ge 0.   
\end{equation*}
Applying the tower property of conditional expectation completes the claim's proof:
\begin{align*}
 \frac{\partial}{\partial \gamma}\mathbb{E}\bigl[\tanh(y{\gamma}^Tx)xx^T\gamma \bigr] &= \mathbb{E}\bigl[h(y{\gamma}^Tx)xx^T\bigr], \\
 &= \mathbb{E}\bigl[\mathbb{E}[ h(y{\gamma}^Tx)xx^T|x]\bigr]\\
 &= \mathbb{E}\bigl[\mathbb{E}[ h(y{\gamma}^Tx)|x]xx^T\bigr]\\
 & \succ \mathbf{0}.
\end{align*}
Therefore, the claim holds since we showed for the feasible $\gamma$'s $\mathbb{E}\bigl[\tanh(y{\gamma}^Tx)xx^T\gamma \bigr]$ will provide an invertible mapping for $\gamma$.

Showing that $\hat{\gamma}_1=\hat{\gamma}_2$, we further claim that $\hat{\gamma}_1=\hat{\gamma}_2=\tilde{\gamma}$ since otherwise the maximization objective will be $-\Vert \hat{\gamma}_1 - \tilde{\gamma}\Vert^2 -\Vert \hat{\gamma}_2 - \tilde{\gamma}\Vert^2$ which is not optimal given that $\hat{\gamma}_1=\hat{\gamma}_2=\tilde{\gamma}$ achieves a larger value of $0$. Consequently, the optimality condition for the maximization problem at $\hat{\gamma}_1=\hat{\gamma}_2=\tilde{\gamma}$ shows that
\begin{equation}
    \mathbb{E}_{p(\hat{\beta})}\bigl[yx\tanh(y\tilde{\gamma}^Tx) \bigr] = \mathbb{E}_{p(\beta^*)}\bigl[yx\tanh(y\tilde{\gamma}^Tx) \bigr].
\end{equation}
We claim that the above equality implies that either $\hat{\beta}=\beta^*$ or $\hat{\beta}=-\beta^*$ holds. To see this, note that
\begin{equation}
    \frac{\partial}{\partial \beta}\mathbb{E}_{p(\beta)}\bigl[yx\tanh(y\tilde{\gamma}^Tx) \bigr] = \mathbb{E}_{p(\beta)}\bigl[h(y\tilde{\gamma}^Tx) xx^T\bigr]
\end{equation}
where $h(z):= \tanh(z)+z\tanh'(z)$ is the previously defined odd and increasing function. As we showed earlier, the assumption that $\tilde{\gamma}^Txx^T\beta> 0$ holds with probability $1$ implies that the above partial derivative is positive definite:
\begin{equation}
     \mathbb{E}_{p(\beta)}\bigl[h(y\tilde{\gamma}^Tx) xx^T\bigr] \succ \mathbf{0}.  
\end{equation}
As a result either $\{\hat{\beta}, \beta^*\}$ or $\{\hat{\beta}, -\beta^*\}$ is a subset of a set with an invertible $ \mathbb{E}_{p({\beta})}\bigl[yx\tanh(y\tilde{\gamma}^Tx) \bigr]$ mapping from $\beta$. As a result, we have either $\hat{\beta}=\beta^* $ or $\hat{\beta}=-\beta^*$, which completes the proof.

\section{Proof of Theorem 4}
To show this result note that
\begin{align}
\begin{split}
    \mathcal{L}(\beta) := \max_{\gamma_1,\gamma_2}\: &\Ep {\log\left(\frac{\exp(-y\gamma^\top _1\mathbf{x})+\exp(y\gamma^\top _1\mathbf{x})}{\exp(-y\gamma^\top _2\mathbf{x})+\exp(y\gamma^\top _2\mathbf{x})}\right)} -\Ep{\log\left(\frac{\exp(-y\gamma^\top _1\mathbf{x})+\exp(y\gamma^\top _1\mathbf{x})}{\exp(-y\gamma^\top _2\mathbf{x})+\exp(y\gamma^\top _2\mathbf{x})}\right)} \\
    &\qquad\qquad- \frac{\lambda}{2}\left(\|\gamma_1 - \gammatilde\|^2 + \|\gamma_2 - \gammatilde\|^2\right), \\ \widehat{\mathcal{L}}(\beta) := \max_{\gamma_1,\gamma_2}\: &\Ephat{\log\left(\frac{\exp(-y\gamma^\top _1\mathbf{x})+\exp(y\gamma^\top _1\mathbf{x})}{\exp(-y\gamma^\top _2\mathbf{x})+\exp(y\gamma^\top _2\mathbf{x})}\right)} -\Ep{\log\left(\frac{\exp(-y\gamma^\top _1\mathbf{x})+\exp(y\gamma^\top _1\mathbf{x})}{\exp(-y\gamma^\top _2\mathbf{x})+\exp(y\gamma^\top _2\mathbf{x})}\right)} \\
    &\qquad\qquad- \frac{\lambda}{2}\left(\|\gamma_1 - \gammatilde\|^2 + \|\gamma_2 - \gammatilde\|^2\right).
\end{split}
\end{align}
Therefore, assuming $\gamma^*_1,\gamma^*_2$ are the optimal solutions to the maximization problem for the true distribution and minimization variable $\beta$, and that $\hat{\gamma}^*_1,\hat{\gamma}^*_2$ are the optimal solutions to the maximization problem for the empirical distribution and minimization variable $\beta$ we will have
\begin{equation}
    \mathcal{L}(\beta) - \widehat{\mathcal{L}}({\beta}) \le \Ept{\log\left(\frac{\exp(-y{\gamma^*}^\top _1\mathbf{x})+\exp(y{\gamma^*}^\top _1\mathbf{x})}{\exp(-y{\gamma^*}^\top _2\mathbf{x})+\exp(y{\gamma^*}^\top _2\mathbf{x})}\right)}  - \Ephat {\log\left(\frac{\exp(-y{\gamma^*}^\top _1\mathbf{x})+\exp(y{\gamma^*}^\top _1\mathbf{x})}{\exp(-y{\gamma^*}^\top _2\mathbf{x})+\exp(y{\gamma^*}^\top _2\mathbf{x})}\right)}
\end{equation}
and also
\begin{equation}
    \mathcal{L}(\beta) - \widehat{\mathcal{L}}({\beta}) \ge \Ept{\log\left(\frac{\exp(-y\hat{\gamma^*}^\top _1\mathbf{x})+\exp(y{\gamma^*}^\top _1\mathbf{x})}{\exp(-y\hat{\gamma^*}^\top _2\mathbf{x})+\exp(y{\gamma^*}^\top _2\mathbf{x})}\right)}  - \Ephat {\log\left(\frac{\exp(-y{\gamma^*}^\top _1\mathbf{x})+\exp(y\hat{\gamma^*}^\top _1\mathbf{x})}{\exp(-y{\gamma^*}^\top _2\mathbf{x})+\exp(y\hat{\gamma^*}^\top _2\mathbf{x})}\right)}.
\end{equation}
Also, we have the following bound hold for the norm of the optimal maximization variables:
\begin{equation}
    \max\bigl\{\Vert{\gamma}^*_1-\tilde{\gamma}\Vert,\Vert{\gamma}^*_2-\tilde{\gamma}\Vert,\Vert\hat{\gamma}^*_1-\tilde{\gamma}\Vert,\Vert\hat{\gamma}^*_2-\tilde{\gamma}\Vert \bigr\} \le \frac{C^2\eta+\sigma C}{\lambda} 
\end{equation}
To establish a generalization bound on $\vert \mathcal{L}(\beta) - \widehat{\mathcal{L}}({\beta})\vert$, we bound the following concentration error for every norm-bounded $\Vert \gamma_1 -\tilde{\gamma}\Vert,\Vert \gamma_2 -\tilde{\gamma}\Vert\le \frac{C^2\eta+\sigma C}{\lambda} $: $$\Ephat{\log\bigl(\frac{\exp(-y\gamma_1^\top \mathbf{x})+\exp(y\gamma_1^\top \mathbf{x})}{\exp(-y\gamma_2^\top \mathbf{x})+\exp(y\gamma_2^\top \mathbf{x})}\bigr)}- \Ept{\log\bigl(\frac{\exp(-y\gamma_1^\top \mathbf{x})+\exp(y\gamma_1^\top \mathbf{x})}{\exp(-y\gamma_2^\top \mathbf{x})+\exp(y\gamma_2^\top \mathbf{x})}\bigr)}.$$ 
We claim that $\log\bigl((\exp(-y\gamma^\top \mathbf{x})+\exp(-y\gamma^\top \mathbf{x})\bigr) - \mathbb{E}[\log\bigl((\exp(-y\gamma^\top \mathbf{x})+\exp(-y\gamma^\top \mathbf{x})\bigr)]$ is a sub-Gaussian random variable with degree $C^2\eta^2(C^2\eta^2+\sigma^2)$. This is because
\begin{align*}
    \Pr{ \log(\frac{\exp(-y\gamma^\top \mathbf{x})+\exp(-y\gamma^\top \mathbf{x})}{2}) \ge v } &\le \Pr{ \vert y\gamma^\top \mathbf{x}\vert  \ge v } \\
    &\le \Pr{ \vert y\vert  \ge \frac{v}{\eta C} } \\
    &\le 2\exp\bigl(-\frac{v^2}{C^2\eta^2(C^2\eta^2+\sigma^2)}\bigr).
\end{align*}
The above holds because $y$ is the sum of two independent sub-Gaussian variables, i.e., the bounded $\beta^T\bfx$ and Gaussian $\epsilon$. Therefore, the claim holds and covering all feasible norm-bounded $\gamma_1,\gamma_2$ vectors with $O((\frac{C^2\eta+\sigma C}{\lambda})^d)$ points, we will have the following bound hold with probability at least $1-\delta$ for every norm-bounded $\Vert\beta\Vert_2\le \eta$:
\begin{align*}
    |\mathcal{L}(\beta) - \widehat{\mathcal{L}}(\beta) | &\le O\bigl(\sqrt{\frac{C^2\eta^2(C^2\eta^2+\sigma^2)d\log((C^2\eta+\sigma C)/\lambda\delta)}{n}}\bigr) \\
    &= O\bigl(\sqrt{\frac{C^4\eta^4\sigma^2d\log((C^2\eta+\sigma C)/\lambda\delta)}{n}}\bigr)
\end{align*}
To establish the generalization bound for the objective's gradient, note that the optimal solution to the maximization problem will satisfy the following equations:
\begin{align*}
    \gamma^*_1-\tilde{\gamma}&=\frac{1}{\lambda}\Ept{y\bfx \tanh(\gamma^*_1y\bfx)} - \frac{1}{\lambda}\Ep {y\bfx \tanh(\gamma^*_1y\bfx)}, \\
    \gamma^*_2-\tilde{\gamma}&=\frac{1}{\lambda}\Ep{y\bfx \tanh(\gamma^*_2y\bfx)}-\frac{1}{\lambda}\Ept{y\bfx \tanh(\gamma^*_2y\bfx)}, \\
    \hat{\gamma}^*_1-\tilde{\gamma}&=\frac{1}{\lambda}\Ephat{y\bfx \tanh(\hat{\gamma}^*_1y\bfx)}-\frac{1}{\lambda}\Ep {y\bfx \tanh(\hat{\gamma}^*_1y\bfx)}, \\
    \hat{\gamma}^*_2-\tilde{\gamma}&=\frac{1}{\lambda}\Ep {y\bfx \tanh(\hat{\gamma}^*_2y\bfx)}-\frac{1}{\lambda}\Ephat {y\bfx \tanh(\hat{\gamma}^*_2y\bfx)}.
\end{align*}
Since both $\Vert \bfx \Vert\le C,\, \vert\tanh(\gamma^T y\bfx)\vert\le 1$ are bounded, we have the following tail bound for $y\bfx \tanh(\gamma^T y\bfx)$:
\begin{equation}
\Pr{|y\bfx \tanh(\gamma^T y\bfx)|>v}\le \Pr{|y|>\frac{v}{C}}\le \exp(-\frac{v^2}{C^2(\eta^2 C^2+\sigma^2)}).
\end{equation}
Note that the above holds because $y$ is the sum of two independent sub-Gaussian random variables $\beta^T\bfx$ and $\epsilon$. As a result, $y\bfx \tanh(\gamma^T y\bfx)$ is a sub-Gaussian random variable with degree $C^2(\eta^2 C^2+\sigma^2)$. 
Therefore, for the function $g(\gamma)= \gamma - \frac{1}{\lambda}\Ep{y\bfx\tanh({\gamma}y\bfx)}$ whose Jacobian is $\frac{C\eta}{\lambda}$-close to identity, using a covering for all the potential norm-bounded solution of $\gamma^*$'s we have the following hold with probability at least $\delta$:
\begin{align*}
    \Vert g(\hat{\gamma}^*_1) - g(\gamma^*_1) \Vert &\le O\big(\sqrt{\frac{d C^2(\eta^2 C^2+\sigma^2)\log((C^2\eta+\sigma C)/{\lambda}\delta)}{n}} \bigr) \\
    \Vert g(\hat{\gamma}^*_2) - g({\gamma}^*_2) \Vert &\le O\big(\sqrt{\frac{dC^2(\eta^2 C^2+\sigma^2)\log((C^2\eta+\sigma C)/{\lambda}\delta)}{n}} \bigr)
\end{align*}
which implies that
\begin{align*}
    \Vert \hat{\gamma}^*_2 - {\gamma}^*_2 \Vert &\le O\big(\sqrt{\frac{d C^2(\eta^2 C^2+\sigma^2)\log((C^2\eta+\sigma C)/{\lambda}\delta)}{(1-C\eta/\lambda)^2n}} \bigr)  \bigr) \\
    \Vert \hat{\gamma}^*_2 - {\gamma}^*_2 \Vert &\le O\big(\sqrt{\frac{d C^2(\eta^2 C^2+\sigma^2)\log((C^2\eta+\sigma C)/{\lambda}\delta)}{(1-C\eta/\lambda)^2n}} \bigr) 
\end{align*}
Furthermore, according to the Danskin's theorem we have 
\begin{align*}
    \nabla \mathcal{L}(\beta) &=  -\nabla_\beta\Ep{\log\left(\frac{\exp(-y{\gamma^*}^\top _1\mathbf{x})+\exp(y{\gamma^*}^\top _1\mathbf{x})}{\exp(-y{\gamma^*}^\top _2\mathbf{x})+\exp(y{\gamma^*}^\top _2\mathbf{x})}\right)} \\
    &= \Ep{\tanh(y{\gamma^*}^\top _2\mathbf{x})\mathbf{x}\mathbf{x}^T{\gamma^*}_2} - \Ep{\tanh(y{\gamma^*}^\top _1\mathbf{x})\mathbf{x}\mathbf{x}^T{\gamma^*}_1}, \\
    \nabla \widehat{\mathcal{L}}(\beta) &=  -\nabla_\beta\Ep{\log\left(\frac{\exp(-y\hat{\gamma^*}^\top _1\mathbf{x})+\exp(y\hat{\gamma^*}^\top _1\mathbf{x})}{\exp(-y\hat{\gamma^*}^\top _2\mathbf{x})+\exp(y\hat{\gamma^*}^\top _2\mathbf{x})}\right)} \\
    &= \Ep{\tanh(y\hat{\gamma^*}^\top _2\mathbf{x})\mathbf{x}\mathbf{x}^T\hat{\gamma^*}_2} - \Ep{\tanh(y\hat{\gamma^*}^\top _1\mathbf{x})\mathbf{x}\mathbf{x}^T{\gamma^*}_1},
\end{align*}
Combining the above two consequences and noting that $h(z)=z\tanh(z)$ is a $1.2$-Lipschiz function show that the following will hold with probability at least $1-\delta$ for every feasible $\beta$
\begin{equation}
    \Vert\nabla\mathcal{L}(\beta)-\nabla\widehat{\mathcal{L}}(\beta)\Vert \le O\big(\sqrt{\frac{d C^{10} \eta^2(\eta^2 C^2+\sigma^2)\log((C^2\eta+\sigma C)/{\lambda}\delta)}{(1-C\eta/\lambda)^2n}} \bigr)  = O\big(\sqrt{\frac{d \log(1/{\lambda}\delta)}{(1-C\eta/\lambda)^2n}} \bigr).
\end{equation}
Therefore, the proof is complete.

\section{The Expectation Maximization Algorithm for Mixed Linear Regression} \label{sec:app_EM_MLR}
EM seeks to find the MLE estimate, \ie the maximizer of the likelihood function.
Because the likelihood function can be computationally expensive to maximize directly, the EM algorithm instead maximizes a lower bound at each step. For an EM tutorial, see \citet{bilmes1998gentle}. For an adaptation to the MLR problem, see \citet{balakrishnan2017statistical} and references therein.

In our setup, the problem data are $(y_i, x_i)_{i=1}^N$, the latent variable is $Z$, and the parameters to be estimated are $\beta$ and $\sigma^2$ (recall the class of distributions in \eqref{Class_P}). Denote the current parameter estimates by $\beta$ and $\sigma^2$ and the next iterates (to be estimated) by $\tilde{\beta}, \tilde{\sigma^2}$. Then the function of interest is
\begin{align}\label{eq:Qfunction}
    Q(\tilde{\beta}, \tilde{\sigma^2} \mid \beta, \sigma^2) &= 
    \frac{1}{N}\sum_{i=1}^N
        \mathbb{P}_{\beta,\sigma^2}\left({Z=1\mid X = x_i, Y = y_i}\right)\log f_{\tilde{\beta}, \tilde{\sigma^2}}(Z=1, X = x_i, Y = y_i)\nonumber \\
        &\qquad\qquad+\mathbb{P}_{\beta,\sigma^2}\left({Z=2\mid X = x_i, Y = y_i}\right)\log f_{\tilde{\beta}, \tilde{\sigma^2}}(Z=2, X = x_i, Y = y_i),
\end{align}
where $f_{\beta, \sigma^2}$ is the likelihood function of $Z, X, Y$, parameterized by $\beta, \sigma^2$. We can simplify \eqref{eq:Qfunction} for the symmetric 2-component MLR case explored in this work. First, define the weight function $w(x_i, y_i)$ as
\begin{equation}\label{eq:weight_function_em}
    w_{\beta,\sigma^2}(x, y) = \mathbb{P}_{\beta,\sigma^2}\left({Z=1\mid X = x_i, Y = y_i}\right) = \frac{\exp(\frac{-1}{2\sigma^2}(y-\beta^\top x)^2)}{\exp(\frac{-1}{2\sigma^2}(y-\beta^\top x)^2) + \exp(\frac{-1}{2\sigma^2}(y+\beta^\top x)^2)}.
\end{equation}
Then, the function $Q(\tilde{\beta}, \tilde{\sigma^2} \mid \beta, \sigma^2)$ can be simplified as
\begin{equation}\label{eq:Qfunction_simp}
    Q(\tilde{\beta}, \tilde{\sigma^2} \mid \beta, \sigma^2) = 
    -\frac{1}{2}\log{\tilde{\sigma^2}} - \frac{1}{2\tilde{\sigma^2} N}\sum_{i=1}^N w_{\beta,\sigma^2}(x_i, y_i) (y - \tilde{\beta}^\top x)^2 + (1-w_{\beta,\sigma^2}(x_i, y_i))(y + \tilde{\beta}^\top x)^2.
\end{equation}

Note that all parameters inside the weight function are fixed over a maximization. When the noise variance $\sigma^2$ is known, there is a closed form solution for the maximizer of \eqref{eq:Qfunction_simp}:
\begin{align*}
    \tilde{\beta} &= \Sigma_X^{-1} \left(\frac{1}{N}\sum_{i=1}^N (2w_{\beta}(x_i, y_i) - 1)y_ix_i \right),\\
    \tilde{\sigma^2} &= \left(\frac{1}{N}\sum_{i=1}^N w_{\beta,\sigma^2}(x_i, y_i) (y - \tilde{\beta}^\top x)^2 + (1-w_{\beta,\sigma^2}(x_i, y_i))(y + \tilde{\beta}^\top x)^2\right)^{-1}
\end{align*}
Recall that we assume that $p_x$ is known, so the covariance of $X$, $\Sigma_X$ is known. In practice, we can estimate $\Sigma_X$ when the number of samples $N$ is high in the centralized setting. In the distributed setting, we must maximize \eqref{eq:Qfunction_simp} iteratively (\eg with gradient ascent). This procedure is summarized in Algorithms \ref{alg:EM} and \ref{alg:F-EM}.
\begin{algorithm}[h]
   \caption{EM}
   \label{alg:EM}
\begin{algorithmic}
   \STATE {\bfseries Input:} $(x_{i}, y_{i})_{i \in [n]}$, $\beta^{(0)}$, ${\sigma^2}^{(0)}$.
   \FOR{ $t  = 0$ {\bfseries to} $T-1$ }
   \STATE Compute $\beta^{(t+1)}, {\sigma^2}^{(t+1)} = \argmax_{\beta, \sigma^2} Q(\beta, \sigma^2 \mid \beta^{(t)}, {\sigma^2}^{(t)})$

   \ENDFOR
\end{algorithmic}
\end{algorithm}

\begin{algorithm}[h]
   \caption{F-EM}
   \label{alg:F-EM}
\begin{algorithmic}
   \STATE {\bfseries Input:} $(x_{i,m}, y_{i,m})_{m \in [M],\, i \in [N]}$, $\beta^{(0)}$, ${\sigma^2}^{(0)}$, step size $\alpha$.
    \FOR{ $t  = 0$ { \bfseries to} $T-1$ }
    \STATE $\beta' = \beta^{(t)}$, ${\sigma^2}' = {\sigma^2}^{(t)}$
    \FOR{$i = 1$ { \bfseries to} $50$ }
    \STATE Broadcast $\beta^{(t)}$, ${\sigma^2}^{(t)}$ to all agents
    \FOR{each agent $m=1$ {\bfseries to} $M$}
        
        \STATE $\beta^{(t+1)}_m = \beta^{(t)}  + \alpha  \nabla_\beta Q_m(\beta, \sigma^2 \mid \beta', {\sigma^2}')$
        \STATE ${\sigma^2}^{(t+1)}_m = {\sigma^2}^{(t)} + \alpha \nabla_{\sigma^2}Q_m(\beta, \sigma^2 \mid \beta', {\sigma^2}')$
    \STATE Send $\beta^{(t+1)}_m, {\sigma^2}^{(t+1)}_m$ to server
    \ENDFOR
    \STATE Collect $\beta^{(t+1)}_m, {\sigma^2}^{(t+1)}_m$ from all agents $m \in [M]$
    \STATE $\beta^{(t)}  = \frac{1}{M}\sum_{m=1}^M \beta^{(t+1)}_{m}$
    \STATE ${\sigma^2}^{(t)}  = \frac{1}{M}\sum_{m=1}^M {\sigma^2}^{(t+1)}_{m}$
    
    \IF{$\|\frac{1}{M}\sum_{m=1}^M \nabla_\beta Q_m(\beta, \sigma^2 \mid \beta', {\sigma^2}')\| \leq \nu$} 
        \STATE \textbf{break}
    \ENDIF
    \ENDFOR
    \ENDFOR
\end{algorithmic}
\end{algorithm}

\subsection{Gradient Expectation Maximization Algorithm (GEM)}
Instead of solving the maximization problem entirely at each iteration, GEM takes one gradient ascent step on \eqref{eq:Qfunction_simp}. The gradients are 

\begin{align*}
    \nabla_{\tilde{\beta}} Q(\tilde{\beta}, \tilde{\sigma^2} \mid \beta, \sigma^2) &= \frac{1}{\tilde{\sigma^2} N}\sum_{i=1}^N \left(w_{\beta,\sigma^2}(x_i, y_i) (y - \tilde{\beta}^\top x) + (w_{\beta,\sigma^2}(x_i, y_i) - 1)(y + \tilde{\beta}^\top x)\right)x, \\
    \nabla_{\tilde{\sigma^2}} Q(\tilde{\beta}, \tilde{\sigma^2} \mid \beta, \sigma^2) &= 
    \frac{1}{2\tilde{\sigma^4}N}\sum_{i=1}^N w_{\beta,\sigma^2}(x_i, y_i) (y - \tilde{\beta}^\top x)^2 + (1-w_{\beta,\sigma^2}(x_i, y_i))(y + \tilde{\beta}^\top x)^2 - \frac{1}{2\tilde{\sigma^2}}
\end{align*}

This procedure is outlined in Algorithm \ref{alg:GEM}, and the federated extension in Algorithm \ref{alg:F-GEM}

With an appropriate choice of the stepsize $\alpha$, this procedure is an ascent algorithm, \ie 
\begin{equation}
    Q(\beta^{(t+1)}, {\sigma^2}^{(t+1)} \mid \beta^{(t)}, {\sigma^2}^{(t)}) \geq Q(\beta^{(t)}, {\sigma^2}^{(t)} \mid \beta^{(t)}, {\sigma^2}^{(t)}).
\end{equation}
Furthermore, we can incorporate the constraint that $\sigma^2 > 0$ via a projection after the gradient iteration.

\begin{algorithm}[h]
   \caption{Gradient EM}
   \label{alg:GEM}
\begin{algorithmic}
   \STATE {\bfseries Input:} $(x_{i}, y_{i})_{i \in [n]}$, $\beta^{(0)}$, ${\sigma^2}^{(0)}$, step size $\alpha$.
   \FOR{ $t  = 0$ {\bfseries to} $T-1$ }

   \STATE $\beta^{(t+1)} = \beta^{(t+1)}  + \alpha  \nabla_\beta Q(\beta, \sigma^2 \mid \beta^{(t)}, {\sigma^2}^{(t)})$
   \STATE ${\sigma^2}^{(t+1)} = {\sigma^2}^{(t)} + \alpha \nabla_{\sigma^2}Q(\beta, \sigma^2 \mid \beta^{(t)}, {\sigma^2}^{(t)})$

   \ENDFOR
\end{algorithmic}
\end{algorithm}

\begin{algorithm}[h]
   \caption{F-GEM}
   \label{alg:F-GEM}
\begin{algorithmic}
   \STATE {\bfseries Input:} $(x_{i,m}, y_{i,m})_{m \in [M],\, i \in [N]}$, $\beta^{(0)}$, ${\sigma^2}^{(0)}$, step size $\alpha$.

    \FOR{ $t  = 0$ { \bfseries to} $T-1$ }
    \STATE Broadcast $\beta^{(t)}$, ${\sigma^2}^{(t)}$ to all agents
    \FOR{each agent $m=1$ {\bfseries to} $M$}
        \STATE $\beta^{(t+1)}_m = \beta^{(t)}  + \alpha  \nabla_\beta Q_m(\beta, \sigma^2 \mid \beta^{(t)}, {\sigma^2}^{(t)})$
        \STATE ${\sigma^2}^{(t+1)}_m = {\sigma^2}^{(t)} + \alpha \nabla_{\sigma^2}Q_m(\beta, \sigma^2 \mid \beta^{(t)}, {\sigma^2}^{(t)})$
    \STATE Send $\beta^{(t+1)}_m, {\sigma^2}^{(t+1)}_m$ to server
    \ENDFOR
    \STATE Collect $\beta^{(t+1)}_m, {\sigma^2}^{(t+1)}_m$ from all agents $m \in [M]$
    \STATE $\beta^{(t)}  = \frac{1}{M}\sum_{m=1}^M \beta^{(t+1)}_{m}$
    \STATE ${\sigma^2}^{(t)}  = \frac{1}{M}\sum_{m=1}^M {\sigma^2}^{(t+1)}_{m}$
    \ENDFOR
\end{algorithmic}
\end{algorithm}


\section{Numerical Experiments} \label{sec:app_numerical}
All code is included in the supplementary materials. This section provides additional details regarding the implementation. 

\paragraph{Estimating the Noise Variance in WMLR} Recall that we evaluate the negative log likelihood of the estimated regressor. Since WMLR does not estimate $\sigma^2$ explicitly (unlike EM, GEM), we must estimate this from the last iterate $\beta^{(T)} = \hat{\beta}$. We can estimate the noise variance by empirically computing
\begin{align}
    \hat{\sigma^2} = \E{\epsilon^2} = \E{y^2} - \|\hat{\beta}\|^2.
\end{align}
This estimate is used to compute the negative log likelihood for the WMLR algorithm's final iterate.

\paragraph{Motivation for Iteration Count Comparison}
Iteration seems to be (roughly) a good comparison. Gradients of the EM function \eqref{eq:Qfunction_simp} have computational complexity $O(Nd)$, as the dot product is the dominant term in each of the $N$ summands. Similarly, the gradient of $\psi$ with reqspect to $\gamma_1$ and $\gamma_2$ have computational complexity $O(Nd)$ since the terms have the form $\tanh(y\gamma_i^Tx)yx$ (assuming that $\mathrm{tanh}$ can be computed in constant time). The gradient with respect to $\beta$ is a bit trickier, since it is a parameter of a Gaussian distribution from which we generate samples. However, forward-mode AD has complexity that is bounded by a constant factor of the complexity of the function being differentiated \cite{ADreview}.

\paragraph{Hyperparameter Tuning}
GEM has a hyperparameter $\alpha$ that controls the gradient ascent step size (see Algorithm \ref{alg:GEM}). In the centralized experiments, for each SNR, we choose the hyperparameter from 10 points logarithmically spaced between \texttt{1e-4} and \texttt{10} that gives the smallest negative log likelihood. In the federated experiments, we choose the hyperparameter from 20 points logarithmically spaced between \texttt{1e-4} and \texttt{10} that results in the fastest convergence.

WMLR has three hyperparameters: regularization term $\lambda$, maximization step size $\alpha_\mathrm{max}$, and minimization step size $\alpha_\mathrm{min}$. We find the heuristic $\alpha_\mathrm{max} = \frac{1}{2\lambda}$ and $\alpha_\mathrm{min} = \alpha_\mathrm{max}/10$, inspired by Theorem $\ref{theorem:convergence_to_stationary_pt}$, works reasonably well so we only search over $\lambda$. In the centralized experiments, for each SNR, we choose the $\lambda$ from 10 points logarithmically spaced between \texttt{1e-1} and \texttt{2} that gives the smallest negative log likelihood. In the federated experiments, we choose the $\lambda$ from 20 points logarithmically spaced between \texttt{1e-1} and \texttt{2} that results in the fastest convergence. Runs with different hyperparameters are compared in Figure \ref{fig:params}. Values chosen are in Table \ref{tbl:hyperparameters}.

\begin{table}[h]
    \centering
        \caption{Hyperparameter choices for numerical experiments.}
    \label{tbl:hyperparameters}
    \vspace{3mm}
    \begin{tabular}{|c|c|c|}
    \hline
    \multicolumn{3}{|c|}{Federated Experiments, Final Iterate}\\
    \hline
    Method & SNR & Hyperparameter ($\lambda$ or $\alpha$) \\
    \hline
    \multirow{ 2}{*}{EM} 
    &1 & N/A\\
    &10 & N/A\\
    \hline
    \multirow{ 2}{*}{GEM} 
    &1 & $\alpha =2.78$\\
    &10 & did not converge in 100 iterations\\
    \hline
    \multirow{ 2}{*}{WMLR} 
    &1 & $\lambda =0.38$\\
    &10 & $\lambda =0.53$\\
    \hline
    \multirow{ 4}{*}{F-EM} 
    &1 & $\alpha=0.08$\\
    &5 & did not converge\\
    &10 & did not converge\\
    &20 & did not converge\\
    \hline
    \multirow{ 4}{*}{F-GEM} 
    &1 & $\alpha=2.98$\\
    &5 & $\alpha=0.89$\\
    &10 & $\alpha=0.48$\\
    &20 & $\alpha=0.14$\\
    \hline
    \multirow{ 4}{*}{F-WMLR} 
    &1 & $\lambda=0.35$\\
    &5 & $\lambda=0.41$\\
    &10 & $\lambda=0.41$\\
    &20 & $\lambda=0.41$\\
    \hline
    \end{tabular}
\end{table}

\paragraph{Iterations until Convergence} Let $e^{(T)}$ denote the relative $\ell2$ error at the final iterate. We say that an algorithm has converged at iterate $t_0$ if $\forall \, k \in \{t_0, t_0 + 1, ..., T\}$, we have that $e^{(k)} \leq 1.05\cdot e^{(T)}$. In the federated experiments, we list the minimum $t_0$ for which this condition holds.

\paragraph{Confidence Intervals}
We reran the centralized experiments 50 times for WMLR and EM, randomly generating the initialization and data each time. We list confidence intervals in Table \ref{tab:confidence-intervals}. We observe that WMLR consistently outperforms EM as measured by relative $\ell_2$ error.
\begin{table}[h]
    \centering
    \caption{Lower and upper quartiles over 50 runs.}
    \label{tab:confidence-intervals}
    \vspace{3mm}
    \begin{tabular}{|c|c||l|l|}
    \hline
    \multicolumn{4}{|c|}{Centralized Experiments: Confidence Intervals}\\
    \hline
    { SNR } & Method &  {\hspace{0.5cm}NLL\hspace{0.5cm}} & {Relative $\ell_2$ error}\\
    \hline
    \hline
    \multicolumn{4}{|c|}{$n = 100,000$}\\
    \hline
    10&EM             &\small${[2.114,2.126]}$         &\small${[3.68,4.02]\times 10^{-2}}$\\
    &WMLR           &\small${[2.057, 2.105]}$         &\small${{[4.97, 5.64]\times 10^{-3}}}$\\
    \hline
    1&EM             &\small${[1.657, 1.660]}$        &\small${[8.60,9.25]\times 10^{-2}}$\\
    &WMLR           &\small${[1.656, 1.658]}$        &\small${[7.02, 7.58]\times 10^{-2}}$\\
    \hline
    \hline
    \multicolumn{4}{|c|}{$n = 10,000$}\\
    \hline
    10&EM             &\small${[2.729, 2.845]}$         &\small${[1.21, 1.31]\times 10^{-1}}$\\
    &WMLR           &\small${[2.061, 2.223]}$         &\small${{[1.83, 1.99]\times 10^{-2}}}$\\
    \hline
    1&EM             &\small${[1.661, 1.671]}$        &\small${[2.74, 3.05]\times 10^{-1}}$\\
    &WMLR           &\small${[1.655, 1.666]}$        &\small${[2.37, 2.53]\times 10^{-1}}$\\
    \hline
    \end{tabular}
\end{table}

\paragraph{Computing Setup}
All experiments were run on a MacBook Pro with a 2.3GhZ 8-Core Intel i9 processor and 32GB of RAM. For each algorithm, running 100 iterations with 100k samples in the centralized case takes under 2min. Specifically, EM and GEM take about 4 seconds, and WMLR takes about 75 seconds. We suspect that much of this difference may come from implementation (\eg we use automatic differntiation for WMLR, whereas we wrote an efficient, non-allocation gradient function for GEM and we analytically compute the maximizer in EM). In the federated case, all methods took on the order of 1-3min per 100 iterations.

\begin{figure}
    \centering
    \includegraphics[width=0.32\linewidth]{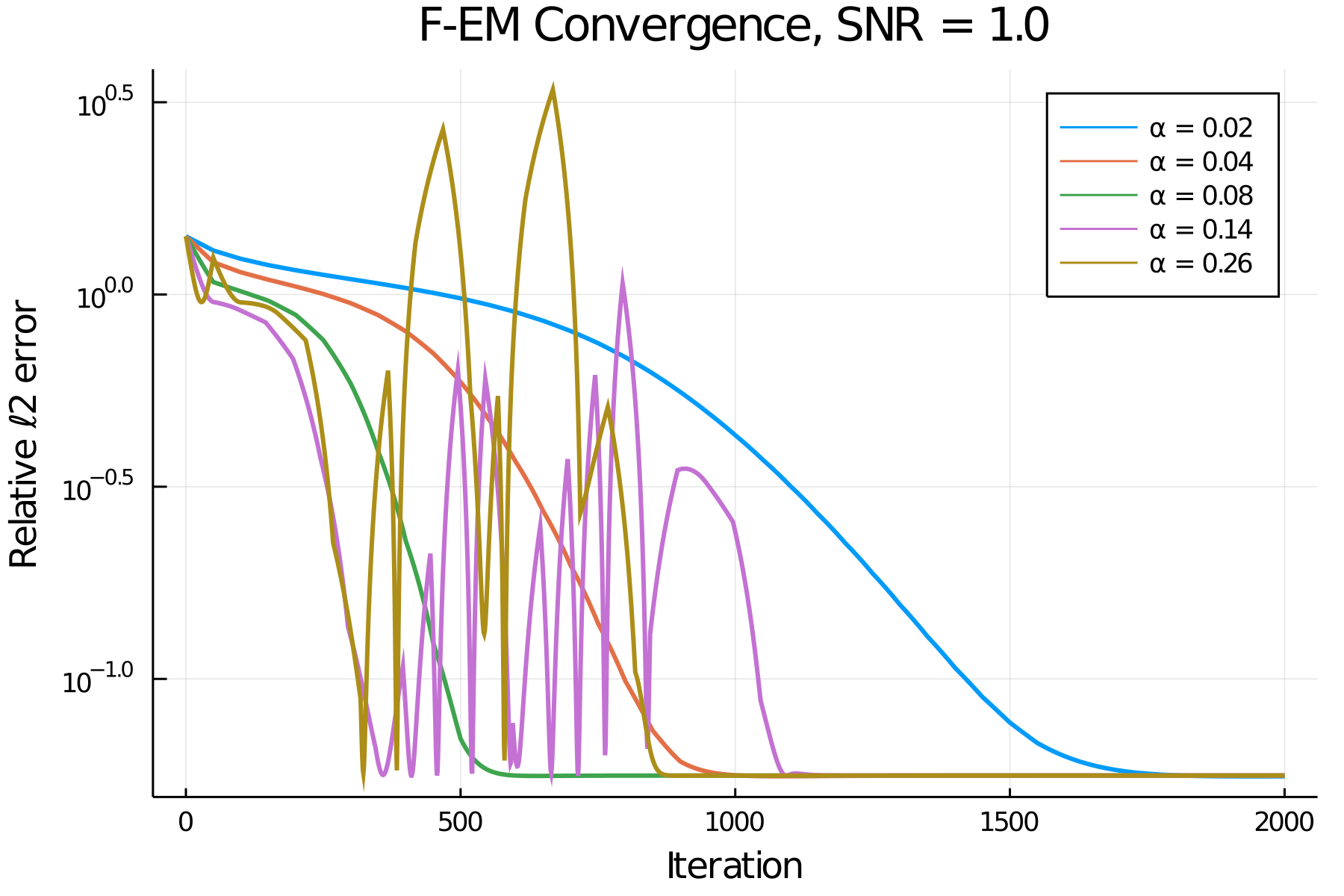}
    \includegraphics[width=0.32\linewidth]{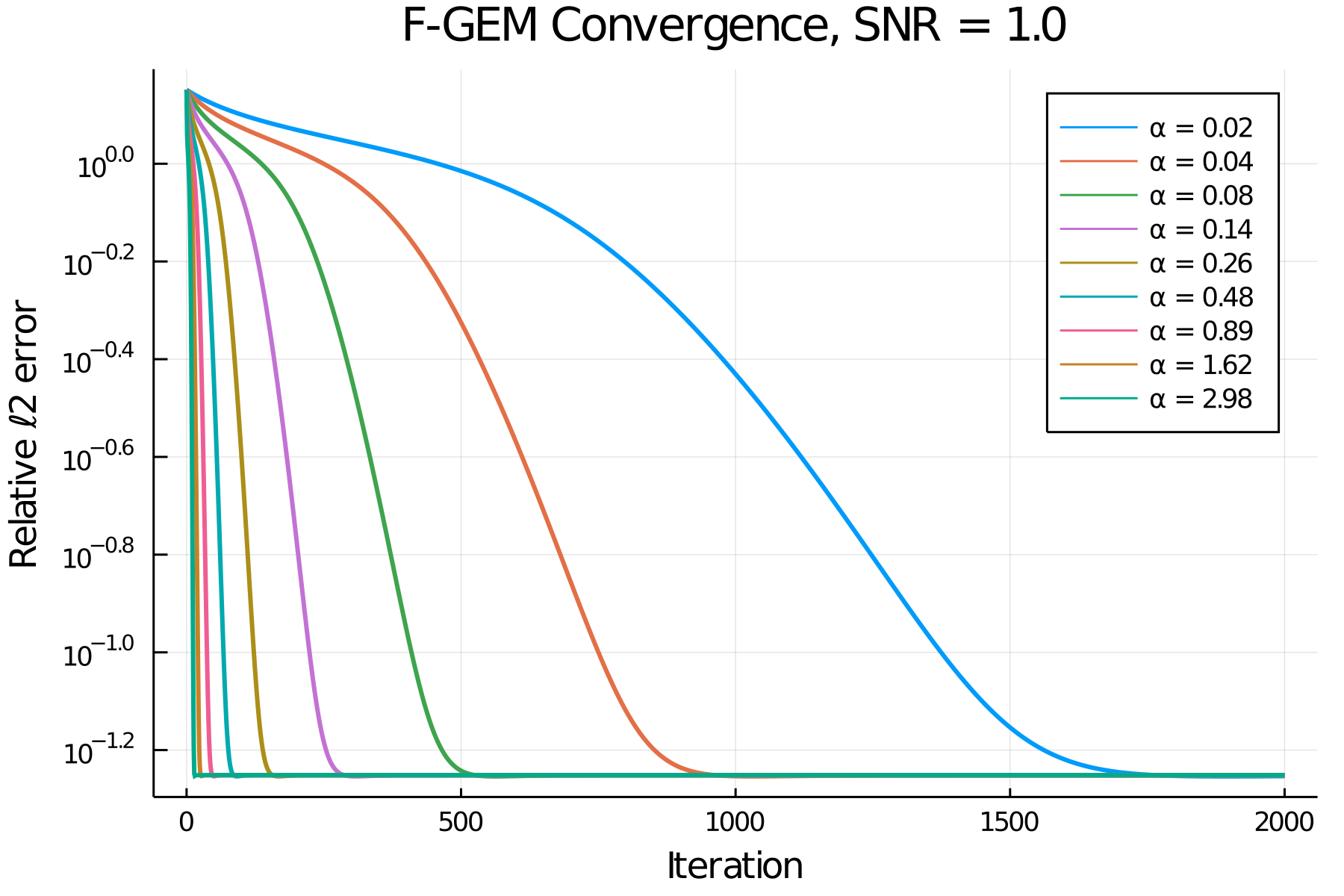}
    \includegraphics[width=0.32\linewidth]{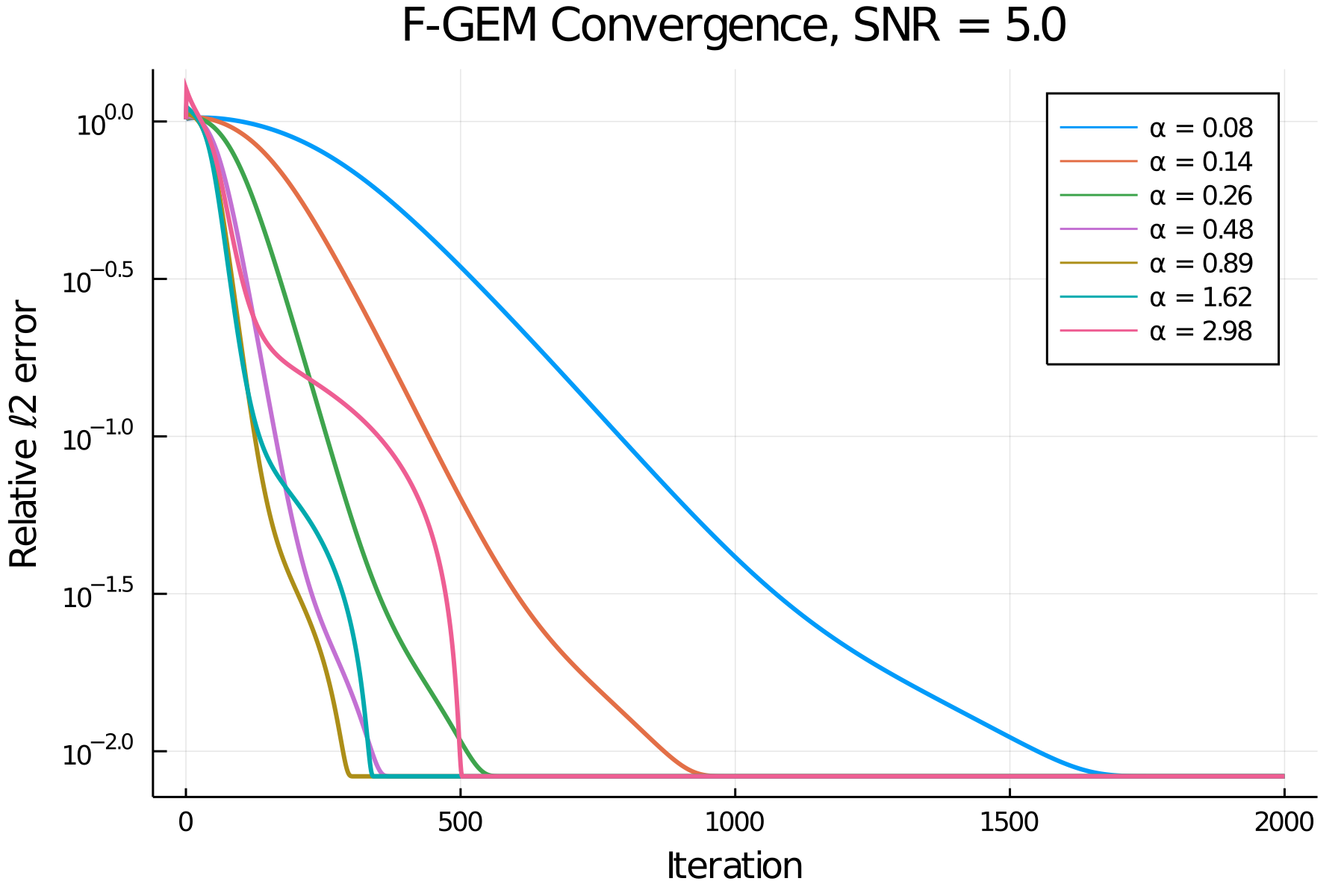}
    \includegraphics[width=0.32\linewidth]{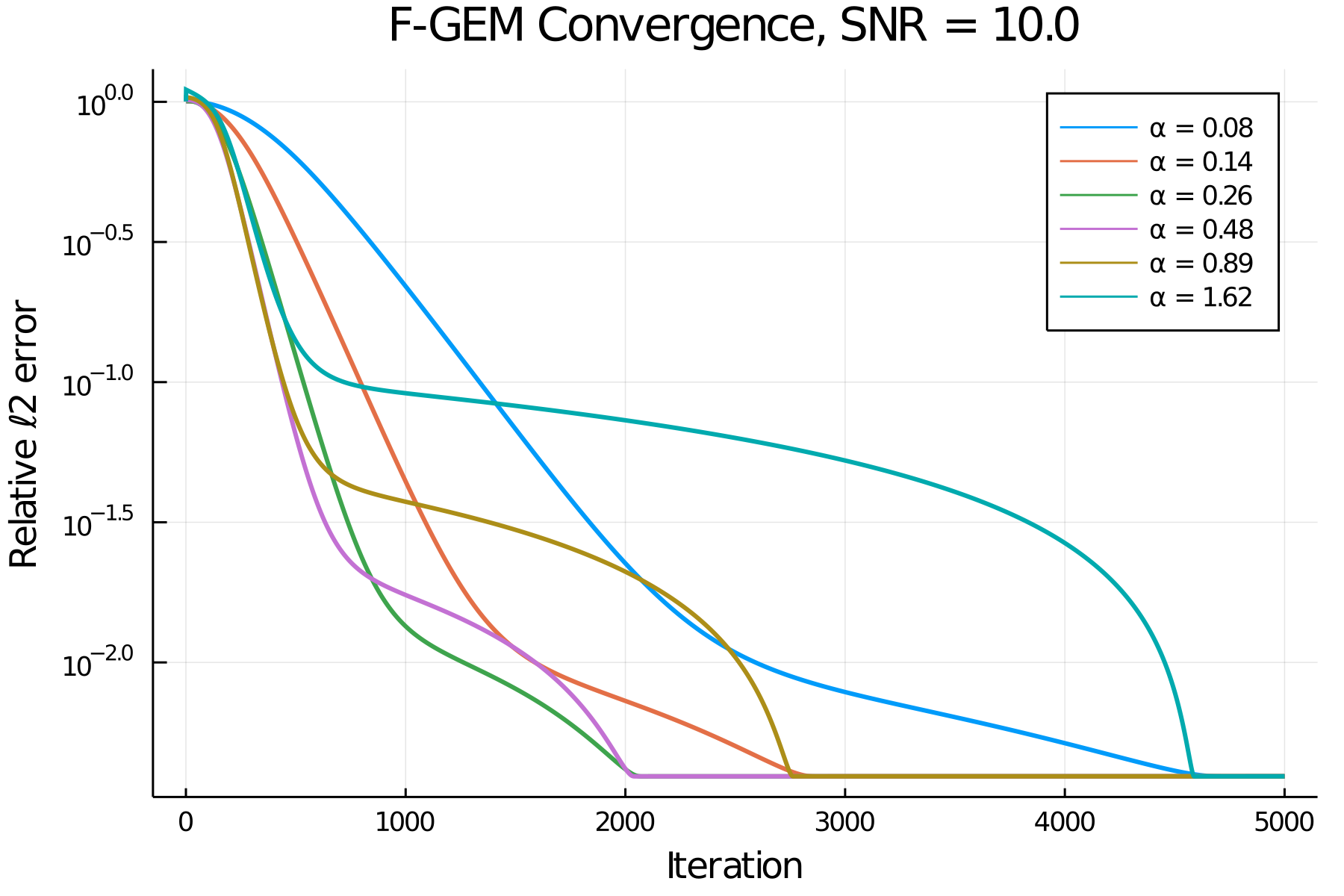}
    \includegraphics[width=0.32\linewidth]{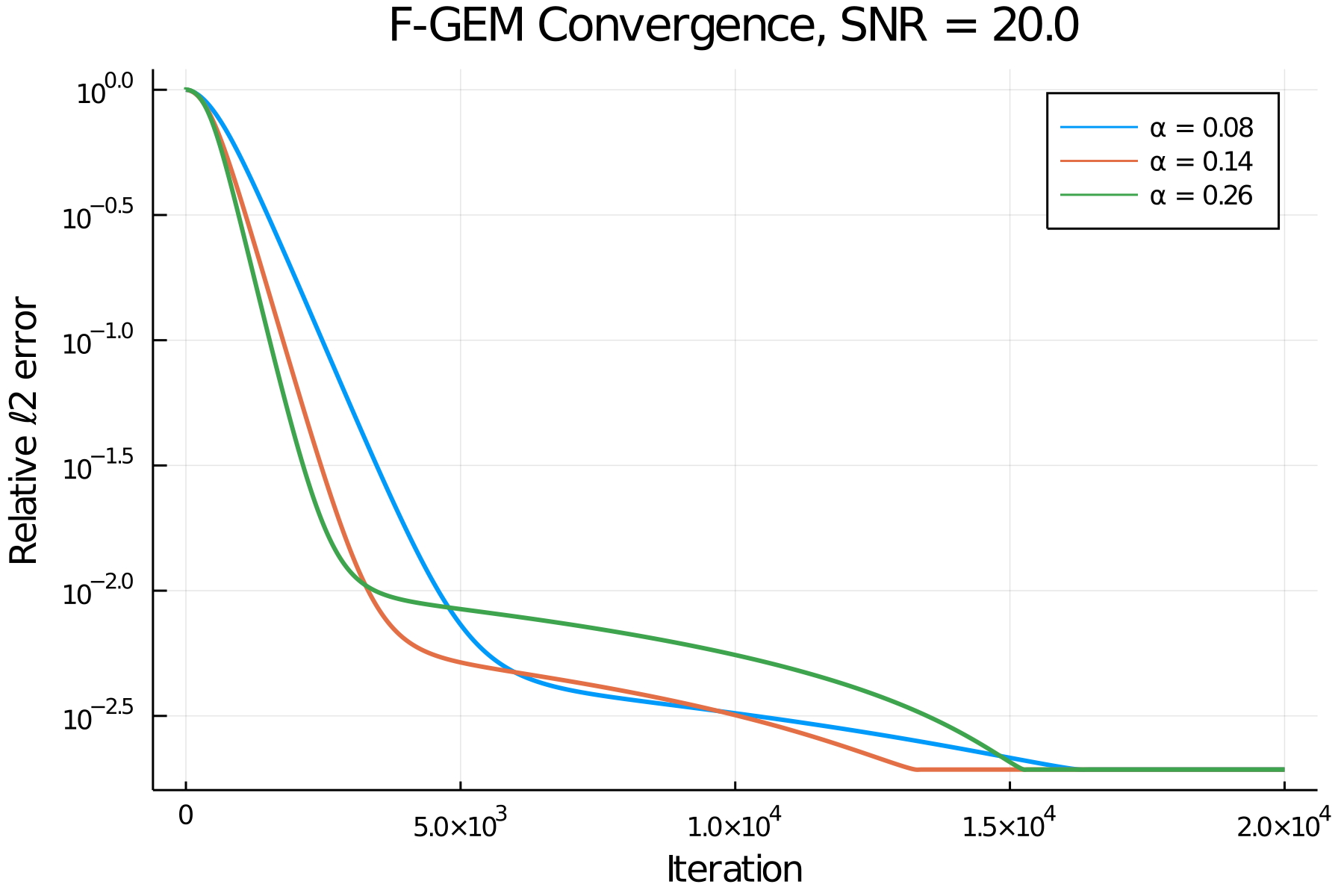}
    \includegraphics[width=0.32\linewidth]{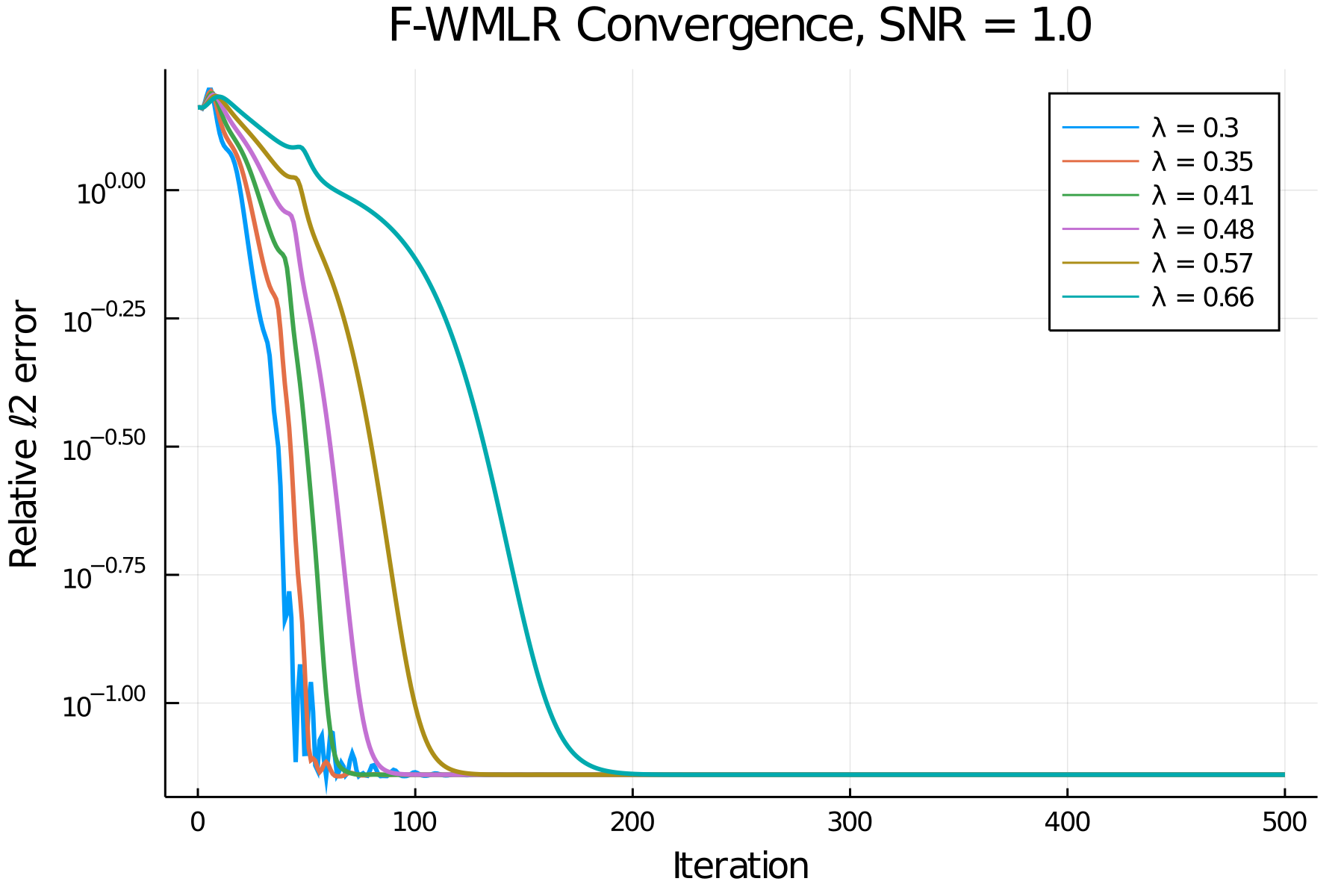}
    \includegraphics[width=0.32\linewidth]{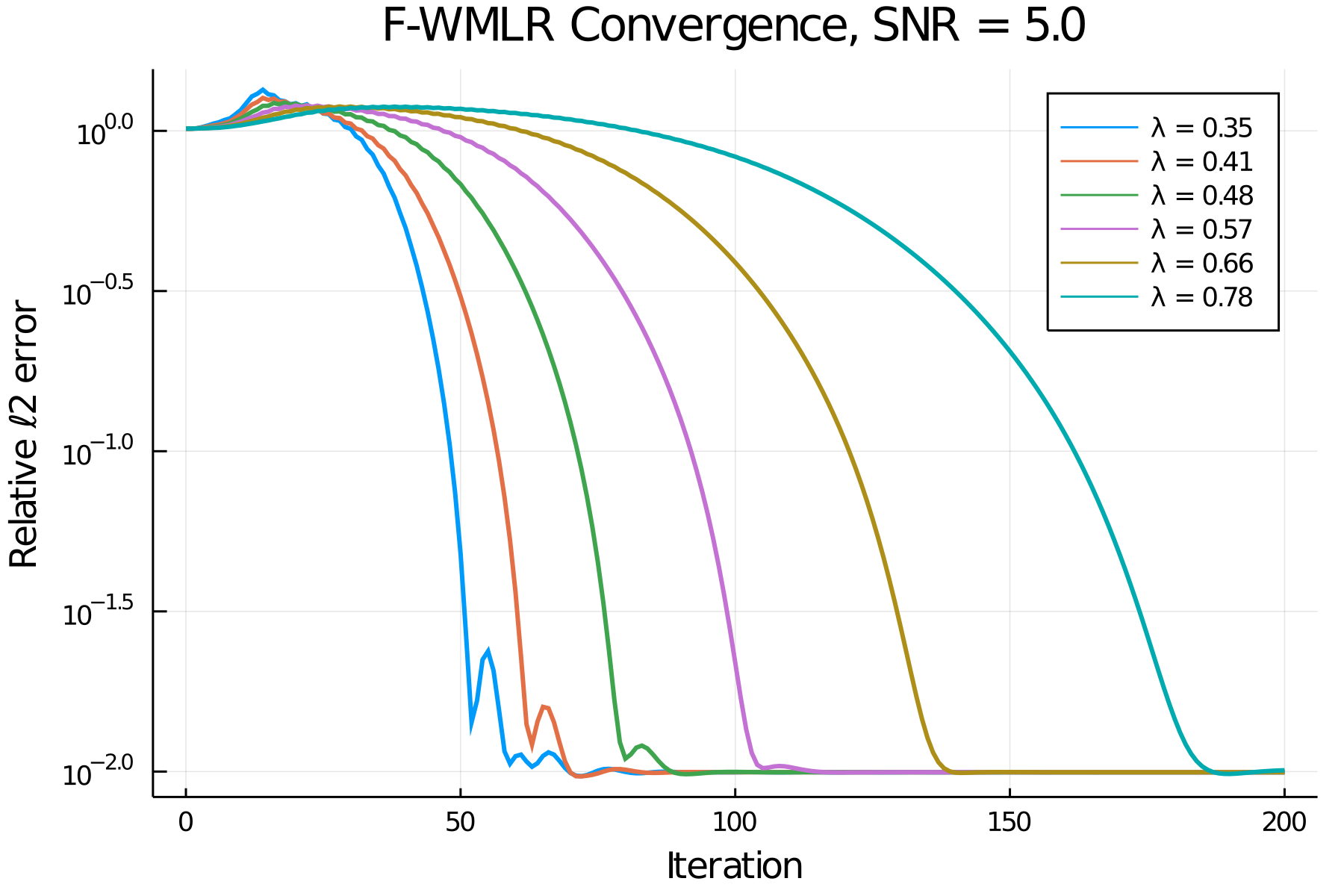}
    \includegraphics[width=0.32\linewidth]{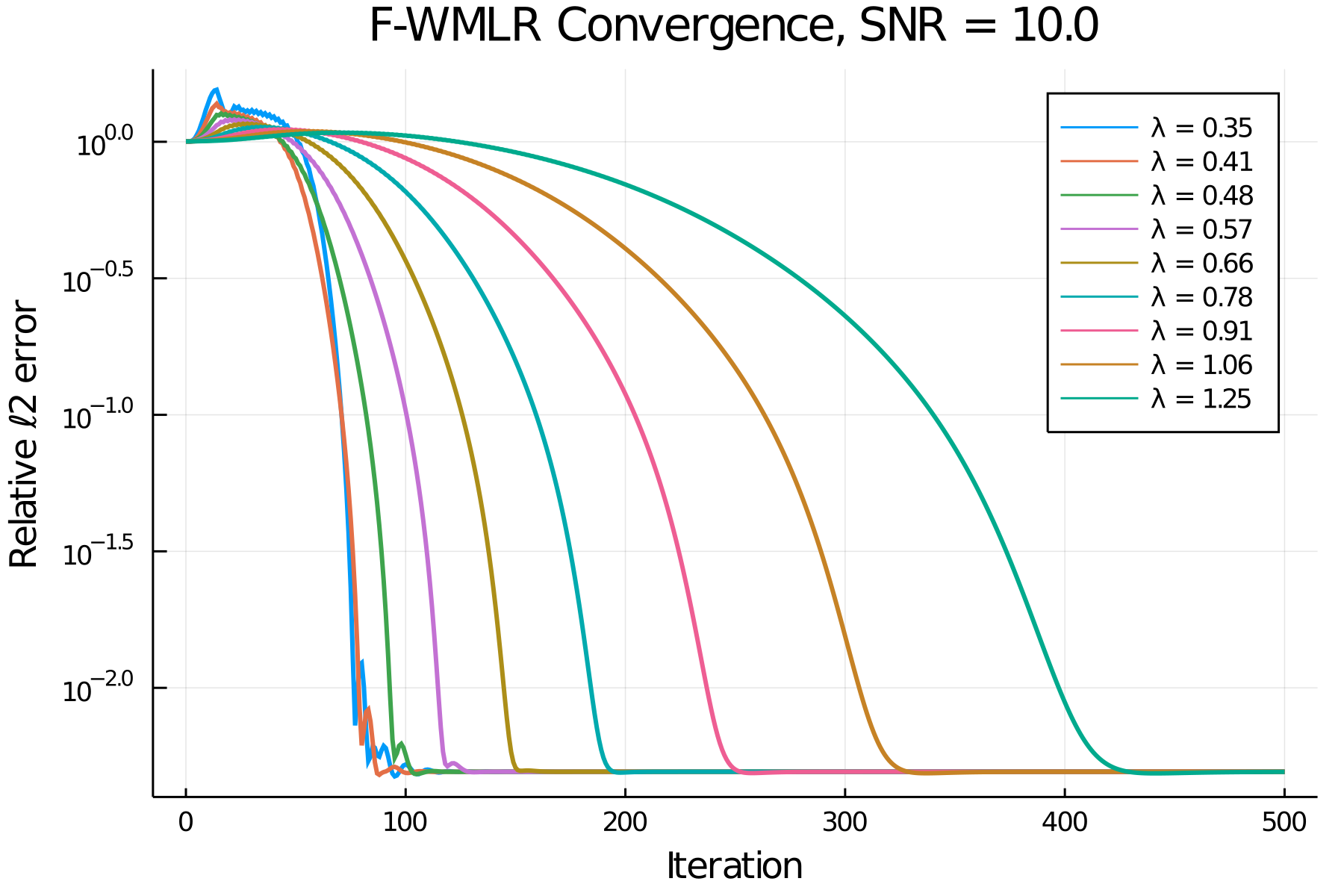}
    \includegraphics[width=0.32\linewidth]{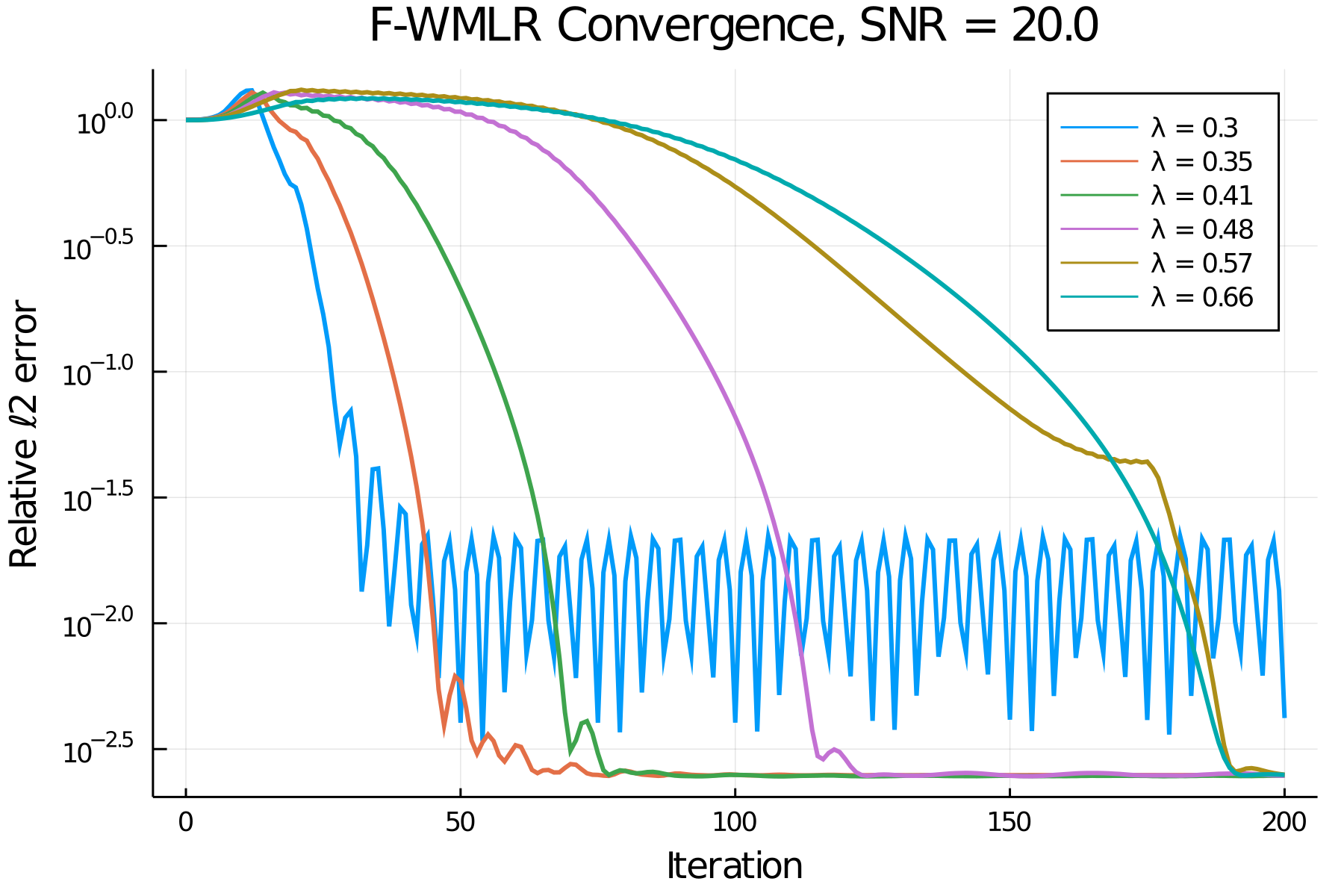}
    \caption{Convergence for different hyperparameter values.}
    \label{fig:params}
\end{figure}

\end{document}